\newif\ifsup
  \definecolor{dkblue}{cmyk}{1,.54,.04,.19} 
\title{On First-Order Bounds, Variance and Gap-Dependent Bounds for Adversarial Bandits}
\author{Roman Pogodin \\
Gatsby Computational Neuroscience Unit\\
University College London, London, UK\\
\href{mailto://roman.pogodin.17@ucl.ac.uk}{roman.pogodin.17@ucl.ac.uk}
\And
Tor Lattimore \\
DeepMind\\
London, UK\\
\href{mailto://tor.lattimore@gmail.com}{tor.lattimore@gmail.com}
}
\newcommand{\half}{\frac{1}{2}}
\DeclareMathOperator*{\argmin}{arg\,min}
\DeclarePairedDelimiterX{\infdivx}[2]{(}{)}{%
  #1\;\delimsize\|\;#2%
}
\DeclarePairedDelimiter{\norm}{\lVert}{\rVert}
\newcommand{\cO}{O }
\newcommand{\cG}{\mathcal G}
\renewcommand{\mid}{\,|\,}
\newcommand{\N}{\mathbb N}
\newcommand{\Var}{\operatorname{Var}}
\newcommand{\E}{\mathbb E}
\newcommand{\cF}{\mathcal F}
\newcommand{\ones}{\bm 1}
\newcommand{\cA}{\mathcal A}
\newcommand{\interior}{\operatorname{interior}}
\newcommand{\dom}{\operatorname{dom}}
\newcommand{\R}{\mathbb R}
\newcommand{\ceil}[1]{\left\lceil{#1}\right\rceil}
\newcommand{\Prob}[1]{\mathbb{P}\left(#1\right)}
\newcommand{\EE}[1]{\mathbb{E}\left[#1\right]}
\newcommand{\expect}{\mathbb{E}\,}
\newcommand{\set}[1]{\left\{ #1 \right\}}
\newcommand{\brackets}[1]{\left(#1\right)}
\newcommand{\diag}{\,\mathrm{diag\,}}
\newcommand{\inv}{^{-1}}
\newcommand{\dotprod}[2]{\left\langle #1, #2\right\rangle}
\newcommand{\ip}[1]{\langle #1\rangle}
\newcommand{\one}[1]{\mathds{1}\left\{#1\right\}}
\theoremstyle{plain}
\newtheorem{theorem}{Theorem}[section]
\newtheorem{lemma}[theorem]{Lemma}
\newtheorem{corollary}[theorem]{Corollary}
\theoremstyle{definition}
\newtheorem{assumption}[theorem]{Assumption}
\newtheorem{remark}[theorem]{Remark}
\theoremstyle{remark}
\begin{document}

\maketitle

\begin{abstract}
We make three contributions to the theory of $k$-armed adversarial bandits. First, we prove a first-order bound for a modified variant of the INF 
strategy by \cite{AB09}, without sacrificing worst case optimality or modifying the loss estimators. Second, we provide a variance analysis for algorithms based on follow the regularised leader, showing that without
adaptation the variance of the regret is typically $\Omega(n^2)$ where $n$ is the horizon.
Finally, we study bounds that depend on the degree of separation of the arms, generalising the results by \cite{CK15c} from the stochastic setting to the adversarial
and improving the result of \cite{SS14} by a factor of $\log(n) / \log\log(n)$.
\end{abstract}

\section{INTRODUCTION}

The $k$-armed adversarial bandit is a sequential game played over $n$ rounds.
At the start of the game the adversary secretly chooses a sequence
of losses $(\ell_t)_{t=1}^n$ with $\ell_t \in [0,1]^k$. In each round $t$ the learner chooses a distribution $P_t$ over the actions $[k] = \{1,2,\ldots,k\}$. 
An action $A_t \in [k]$ is sampled from $P_t$ and the learner observes the loss $\ell_{tA_t}$.
Like prior work we focus on controlling the regret, which is
\begin{align*}
\hat R_n = \max_{i \in [k]} \sum_{t=1}^n \left(\ell_{tA_t} - \ell_{ti}\right)\,.
\end{align*}
This quantity is a random variable, so the standard objective is to bound $\hat R_n$ with high probability or its expectation: $R_n = \E[\hat R_n]$.

We make three contributions, with the common objective of furthering our understanding of the application of follow the regularised leader (FTRL) to adversarial bandit problems.
Our first contribution is a modification of the INF policy by \cite{AB09} in order to prove first-order bounds (i.e. in terms of the loss of the best action) without sacrificing minimax optimality.
Then we turn our attention to the variance of algorithms based on FTRL. Here we prove that using the standard importance-weighted estimators and a large class of potentials leads
to a variance of $\Omega(n^2)$, which is the worst possible for bounded losses.
Finally, we investigate the asymptotic performance of algorithms when there is a linear separation between the losses of the arms. We improve the result by \cite{SS14} by a factor of $\log(n) / \log \log(n)$ and 
generalise known results in the stochastic
setting by \cite{CK15c} to the adversarial one by constructing an algorithm for which the regret grows arbitrarily slowly almost surely.

\paragraph{Related work}
The literature on adversarial bandits is enormous. See the books by \cite{BC12} and \cite{LS19bandit-book} for a comprehensive account.
The common thread in the three components of our analysis is adaptivity for algorithms based on follow the regularised leader.
The INF policy that underlies much of our analysis was introduced by \cite{AB09}. The connection to mirror descent and follow the regularised leader came
later \citep{AB10,BC12}, which greatly simplified the analysis. The principle justification for introducing this algorithm was to prove bounds on the
minimax regret. Remarkably, it was recently shown that by introducing a non-adaptive decaying learning rate, the algorithm retains minimax
optimality while simultaneously achieving a near-optimal logarithmic regret in the stochastic setting \citep{ZiSe18}. Despite its simplicity, the algorithm
improves on the state-of-the-art for this problem \cite{BuSli12}, \cite{SS14}, \cite{SeLu17}. See also the extension to the combinatorial semibandit setting \citep{ZLW19}.
First-order bounds for bandits were first given by \cite{AAGO06}, who analysed a modification of Exp3 \citep{ACFS95}.  As far as we know, previous algorithms with first
order bounds have not been minimax optimal ($R_n=O(\sqrt{kn})$): the recent work by \cite{Neu15b} achieved $O(\sqrt{kn(\log(k)+1)})$ expected regret, and \citep{CL18} had a $O(\sqrt{kn\log n})$ bound. Both papers used the idea of an adaptive learning rate similar to our analysis. In the setting of gains rather than losses \cite{AB10} have shown that by introducing biased estimators it is possible to
prove a bound of $O(\sqrt{k G^*})$ where $G^*$ is the maximum gain. Although it is not obvious, we suspect the same idea could be applied in our setting. 
We find it interesting nevertheless that the same affect is possible without modifying the loss estimators.  
The aforementioned work also assumes knowledge of $G^*$. Possibly our adaptive learning rates could be used to make this algorithm anytime without a doubling trick.

Although it is well known that straightforward applications of follow the regularised leader or mirror descent with importance-weighted estimators leads to poor concentration of the regret,
we suspect the severity of the situation is not widely appreciated. As far as we know, the quadratic variance of Exp3 was only derived recently \citep[\S11]{LS19bandit-book}.
There are, however, a number of works modifying the importance-weighted estimators to prove high probability bounds \cite{ACFS95,AR09,Neu15} with matching lower bounds by \cite{GL16}.
Finally, we note there are many kinds of adaptivity beyond first-order bounds. For example sparsity and variance \citep[and others]{BCL17,HK11b}.

\section{NOTATION}

Given a vector $x \in \R^d$ let $\diag(x) \in \R^{d\times d}$ be the diagonal matrix with $x$ along the diagonal. 
The interior of a topological space $X$ is $\interior(X)$ and its boundary is $\partial X$.
The standard basis vectors are $e_1,\ldots,e_d$.
The $(d-1)$-dimensional probability simplex is $\Delta^{d-1} = \{x \in [0,1]^d : \norm{x}_1 = 1\}$.
A convex function $F : \R^d \to \R \cup \{\infty\}$ has domain $\dom(F) = \{x \in \R^d : F(x) \neq \infty\}$.
The Bregman divergence with respect to a differentiable $F$ is a function $D_F : \dom(F) \times \dom(F) \to [0, \infty]$ defined by
$D_F(x, y) = F(x) - F(y) - \left\langle\nabla F(y), x-y\right\rangle$. The Fenchel dual of $F$ is $F^* : \R^d \to \R \cup \{\infty\}$ defined by $F^*(u) = \sup_{x \in \R^d} \ip{x, u} - F(x)$.

There are $k$ arms and the horizon is $n$, which may or may not be known.
The losses are $(\ell_t)_{t=1}^n$ with $\ell_t \in [0,1]^k$.
We let $L_t = \sum_{s=1}^t \ell_s$.
The importance-weighted estimator of $\ell_t$ is $\hat \ell_t$ defined by
$\hat \ell_{ti} = \one{A_t = i} \ell_{ti}/ P_{ti}$.
All algorithms proposed here ensure that $P_{ti} > 0$ for all $t$ and $i$, so this quantity is
always well defined. Let $\hat L_t = \sum_{s=1}^t \hat \ell_s$.
Expectations are with respect to the randomness in the actions $(A_t)_{t=1}^n$.
Of course the learner can only choose $P_t$ based on information available at the start of round $t$.
Let $\cF_t = \sigma(A_1,\ldots,A_t)$.
Then $P_t$ is $\cF_{t-1}$-measurable. 
Let $A_{ti} = \one{A_t = i}$ and $T_i(t) = \sum_{s=1}^t A_{si}$ be the number of times arm $i$ is played in the first $t$ rounds.
Our standing assumption is that the first arm is optimal. All our algorithms are symmetric, so this is purely for notational convenience. 

\begin{assumption}
$L_{t1} = \min_{i \in [k]} L_{ti}$.
\end{assumption}

\section{FOLLOW THE REGULARISED LEADER}

Follow the regularized leader (FTRL) is a popular tool for online optimization \citep{Sha07,Haz16}.
The basic algorithm depends on a sequence of potential functions $(F_t)_{t=1}^\infty$ where
$F_t : \R^d \to \R \cup \{\infty\}$ is convex and $\dom(F_t) \cap \Delta^{k-1} \neq \emptyset$.
In each round the algorithm chooses the distribution
\begin{align*}
P_t = \argmin_{p \in \Delta^{k-1}} \ip{p, \hat L_{t-1}} + F_t(p)\,,
\end{align*}
which we assume exists.
The action $A_t \in [k]$ is sampled from $P_t$.
In many applications $F_t = F$ is chosen in a time independent way, with examples
given in \cref{tab:pot}. This has the disadvantage that $F$ must be chosen in advance in a way that
depends on the horizon, which may be unknown. This weakness can be overcome by choosing $F_t = F / \eta_t$  
where $(\eta_t)_{t=1}^\infty$ is a sequence of learning rates, which may be chosen in advance or adaptively in a 
data-dependent way.

A modification that will prove useful is to let $(\cA_t)_{t=1}^\infty$ be a sequence of subsets of $\Delta^{k-1}$ and
define
\begin{align*}
P_t = \argmin_{p \in \cA_t} \ip{p, \hat L_{t-1}} + F_t(p)\,.
\end{align*}
The restriction to a subset of $\Delta^{k-1}$ can be useful to control the gradients of $F_t(P_t)$, which is sometimes crucial.
\begin{table}
\renewcommand{\arraystretch}{1.6}
\centering
\small
\begin{tabular}{|lll|}
\hline 
\textbf{Potential} & \textbf{Definition} & \textbf{Alg.} \\
Negentropy & $\frac{1}{\eta} \sum_{i=1}^k p_i (\log(p_i) - 1)$ & Exp3 \\
1/2-Tsallis &  $-\frac{2}{\eta} \sum_{i=1}^k \sqrt{p_i}$ & INF \\ 
Log barrier & $-\frac{1}{\eta}\sum_{i=1}^k \log(p_i)$ & \\ \hline
\end{tabular}
\caption{Common potential functions}\label{tab:pot}
\end{table}
The following theorem provides a generic bound for FTRL with changing potentials and constraint sets.
The result is reminiscent of many previous bounds for FTRL, but a reference for this result seems elusive.
Most related is the generic analysis by \cite{JGS17}, which also provides the most comprehensive literature summary.

\begin{theorem}\label{thm:ftrl}
  Assume $\cA_1 \subseteq \cdots \subseteq \cA_{n+1} \subseteq \Delta^{k-1}$ and
 $(F_t)_{t=1}^{n+1}$ is a sequence of convex functions with $\dom(F_t) \cap \cA_t \neq \emptyset$ for all $t$.
 Define 
  \begin{align*}
  d_t &= \max_{y \in \cA_{t+1}} \min_{x \in \cA_t} \norm{x - y}_1\,, \quad
  g_t = \sup_{x \in \cA_t} \norm{\nabla F_t(x)}_\infty \\
    &\qquad\text {and} \qquad v_n = \sum_{t=1}^n d_t (g_t + (t-1))\,.
  \end{align*}
  Then the regret of FTRL is bounded by
    \begin{align*}
        &R_n \leq v_n + \EE{\sum_{t=1}^n \dotprod{P_t - P_{t+1}}{\hat \ell_t} - D_{F_t}(P_{t+1}, P_t)} \\
        &\qquad+ \EE{\min_{p \in \cA_{n+1}} \brackets{F_{n+1}(p) + n\| p - e_1 \|_1} - F_1(P_1)} \\
        &\qquad+ \EE{\sum_{t=1}^n\brackets{F_{t}(P_{t+1}) - F_{t+1}(P_{t+1})}} \,.
    \end{align*}
\end{theorem}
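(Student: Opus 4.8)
The plan is to reduce to the importance-weighted losses via unbiasedness and then run a potential-function telescoping adapted to the moving constraint sets and potentials. For the reduction, note that $\dotprod{P_t}{\hat \ell_t}=\ell_{tA_t}$ holds for every realisation (not just in expectation) and $\E[\hat \ell_{ti}\mid\cF_{t-1}]=\ell_{ti}$, so $R_n=\E\big[\sum_{t=1}^n\dotprod{P_t}{\hat \ell_t}\big]-L_{n1}$ and, for any fixed $p$, $R_n=\E\big[\sum_{t=1}^n\dotprod{P_t-p}{\hat \ell_t}\big]+\dotprod{p-e_1}{L_n}$. Since $\norm{L_n}_\infty\le n$ this gives $R_n\le\E\big[\sum_{t=1}^n\dotprod{P_t-p}{\hat \ell_t}\big]+n\norm{p-e_1}_1$, so it suffices to bound $\sum_{t=1}^n\dotprod{P_t-p}{\hat \ell_t}$ pathwise for each fixed $p\in\cA_{n+1}$ and then take the infimum over $p$ at the end.

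For the pathwise bound, set $g_t(q)=\dotprod{q}{\hat L_{t-1}}+F_t(q)$ and $\Phi_t=\min_{q\in\cA_t}g_t(q)=g_t(P_t)$ for $t=1,\dots,n+1$, and telescope $\Phi_{n+1}-\Phi_1=\sum_{t=1}^n(\Phi_{t+1}-\Phi_t)$. Using $\hat L_t=\hat L_{t-1}+\hat \ell_t$ and adding and subtracting $F_t(P_{t+1})$, the increment is $\Phi_{t+1}-\Phi_t=\dotprod{P_{t+1}}{\hat \ell_t}+\big(F_{t+1}(P_{t+1})-F_t(P_{t+1})\big)+\big(g_t(P_{t+1})-g_t(P_t)\big)$. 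The key estimate is a lower bound on $g_t(P_{t+1})-g_t(P_t)=\dotprod{\nabla g_t(P_t)}{P_{t+1}-P_t}+D_{F_t}(P_{t+1},P_t)$: first-order optimality of $P_t$ over $\cA_t$ gives $\dotprod{\nabla g_t(P_t)}{x-P_t}\ge0$ for all $x\in\cA_t$, and choosing $x=q_t\in\cA_t$ with $\norm{q_t-P_{t+1}}_1\le d_t$ (which exists because $P_{t+1}\in\cA_{t+1}$ and $d_t$ is the worst-case distance from $\cA_{t+1}$ into $\cA_t$), together with $\nabla g_t(P_t)=\hat L_{t-1}+\nabla F_t(P_t)$ and $\norm{\nabla F_t(P_t)}_\infty\le g_t$, yields $\dotprod{\nabla g_t(P_t)}{P_{t+1}-P_t}\ge-d_t\big(\norm{\hat L_{t-1}}_\infty+g_t\big)$. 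Hence pathwise $g_t(P_{t+1})-g_t(P_t)\ge D_{F_t}(P_{t+1},P_t)-d_t\big(g_t+\norm{\hat L_{t-1}}_\infty\big)$, and this single inequality yields both the $-D_{F_t}(P_{t+1},P_t)$ term and the constraint-growth penalty.

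Summing the increments, using $\Phi_1=F_1(P_1)$ and $\Phi_{n+1}\le\dotprod{p}{\hat L_n}+F_{n+1}(p)$, subtracting $\dotprod{p}{\hat L_n}$, and adding the stability term $\sum_t\dotprod{P_t-P_{t+1}}{\hat \ell_t}$ to both sides (which turns the left side into $\sum_t\dotprod{P_t-p}{\hat \ell_t}$ and groups $-D_{F_t}(P_{t+1},P_t)$ with it) gives the pathwise bound with $\sum_t d_t(g_t+\norm{\hat L_{t-1}}_\infty)$ in place of $v_n$. Taking expectations, adding the $n\norm{p-e_1}_1$ term from the reduction, passing from $\norm{\hat L_{t-1}}_\infty$ to the true cumulative loss $\norm{L_{t-1}}_\infty\le t-1$, and taking the infimum over $p\in\cA_{n+1}$ (admissible since $p$ enters only through $F_{n+1}(p)+n\norm{p-e_1}_1$) produces the claimed bound. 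The main obstacle is the handling of the growing constraint sets in the middle step: once $\cA_{t+1}\supsetneq\cA_t$ the leader $P_{t+1}$ of the time-$(t{+}1)$ problem may be infeasible for the time-$t$ problem, so the usual ``be the leader'' telescoping fails, and one must route the projection error through the first-order optimality of $P_t$ to still extract the full $-D_{F_t}(P_{t+1},P_t)$; making the final passage from the importance-weighted $\hat L_{t-1}$ (which can be much larger than $t-1$ pathwise) to $L_{t-1}$ under the expectation is the other technically delicate point and is where the nested structure of the $\cA_t$ and the unbiasedness in Step 1 are used.
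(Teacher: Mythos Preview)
Your approach is essentially the paper's: the reduction via unbiasedness, the potential telescoping (your $g_t$ is the paper's $\Phi_t$), the first-order-optimality-plus-projection device for the growing constraint sets, and the extraction of $D_{F_t}(P_{t+1},P_t)$ are all identical in structure.

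There is one genuine gap. You write that after taking expectations you can ``pass from $\norm{\hat L_{t-1}}_\infty$ to the true cumulative loss $\norm{L_{t-1}}_\infty\le t-1$.'' This step fails: the sup-norm is convex, so Jensen gives
\[
\E\big[\norm{\hat L_{t-1}}_\infty\big]\;\ge\;\norm{\E[\hat L_{t-1}]}_\infty\;=\;\norm{L_{t-1}}_\infty,
\]
which is the wrong direction. Unbiasedness alone cannot push $\E[\norm{\hat L_{t-1}}_\infty]$ below $t-1$, and neither does the nestedness of the $\cA_t$ help here since $\delta_t=P_{t+1}-q_t$ is correlated with every $\hat\ell_s$, $s\le t$. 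The fix the paper actually uses is to exploit non-negativity of the estimators: since $\hat L_{t-1}\ge 0$ coordinatewise,
\[
\ip{\hat L_{t-1},\delta_t}\;\ge\;-\norm{\delta_t}_1\,\norm{\hat L_{t-1}}_\infty\;\ge\;-d_t\sum_{i=1}^k \hat L_{t-1,i},
\]
and then $\E\big[\sum_i \hat L_{t-1,i}\big]=\sum_i L_{t-1,i}\le k(t-1)$. This yields $d_t\big(g_t+k(t-1)\big)$ rather than $d_t\big(g_t+(t-1)\big)$; indeed the paper's own proof writes $k(t-1)$ at the corresponding step, and the extra factor of $k$ is immaterial in the applications because $d_t=O(k/t^2)$. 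With this correction your argument goes through verbatim and matches the paper's.
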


\begin{proof}
  Let $p \in \cA_{n+1}$. Using the fact that $\hat \ell_t$ is unbiased,
    \begin{align*}
        R_n
        &= \EE{\sum_{t=1}^n \ip{P_t - e_1, \hat \ell_t}} \\ 
        &= \EE{\sum_{t=1}^n \ip{P_t - p, \hat \ell_t}} + \EE{\sum_{t=1}^n \ip{p - e_1, \ell_t}}\,.
    \end{align*}
    The second sum is the approximation error, and by Holder's inequality,
    \[
        \sum_{t=1}^n \ip{p - e_1, \ell_t} \leq \| p - e_1 \|_1\sum_{t=1}^{n} \norm{ \ell_t}_\infty \leq n\| p - e_1 \|_1 \,.
    \]
    Therefore, 
    \begin{align*}
        &R_n 
        \leq \EE{\sum_{t=1}^n \ip{P_t - P_{t+1}, \hat \ell_t} + \sum_{t=1}^n \ip{P_{t+1} - p, \hat \ell_t}} 
        \\ & \qquad\qquad + n \|  p - e_1 \|_1 \,. 
    \end{align*}
    Let $\Phi_t(q) = F_t(q) + \sum_{s=1}^{t-1}\ip{q, \hat \ell_s}$, which is chosen so that $P_t = \argmin_{q \in \cA_t} \Phi_t(q)$.
    Then the second sum in the above display equals
    \begin{align*}
        &\sum_{t=1}^n\brackets{\Phi_{t+1}(P_{t+1}) - \Phi_{t}(P_{t+1}) - F_{t+1}(P_{t+1}) + F_{t}(P_{t+1})} \\
        &\qquad \qquad - \Phi_{n+1}(p) + F_{n+1}(p) \\ 
        &= \sum_{t=1}^n\brackets{\Phi_{t}(P_t) - \Phi_{t}(P_{t+1})} \\
        &\qquad\qquad + \Phi_{n+1}(P_{n+1}) - \Phi_1(P_1) - \Phi_{n+1}(p) \\ 
        &\qquad\qquad + F_{n+1}(p) + \sum_{t=1}^n\brackets{F_{t}(P_{t+1}) - F_{t+1}(P_{t+1})}
    \end{align*}
    We can rewrite the $\Phi$-differences as
    \begin{align*}
        &\Phi_t(P_t) - \Phi_t(P_{t+1}) \\
         & \qquad= -D_{\Phi_t}(P_{t+1}, P_t) - \ip{\nabla \Phi_t(P_t), P_{t+1} - P_t}\,.
    \end{align*}
    Let $\delta_t = P_{t+1} - \argmin_{q\in \cA_t}\| q - P_{t+1}\|_1$. Then due to first-order optimality condition for $P_{t}$ on $\cA_t$,
    \begin{equation*}
        \EE{\ip{\nabla \Phi_t(P_t), (P_{t+1}-\delta_t) - P_t}} \geq 0,
    \end{equation*}
    therefore
    \begin{align*}
    &\EE{\ip{\nabla \Phi_t(P_t), P_{t+1} - P_t}}
    \geq \EE{\ip{\nabla \Phi_t(P_t), \delta_t}} \\
    &\qquad\geq \EE{\ip{\nabla F_t(P_t), \delta_t} + \sum_{s=1}^{t-1} \ip{\hat \ell_s, \delta_t}} \\
    &\qquad\geq -\EE{\norm{\delta_t}_1 \left(\norm{\nabla F_t(P_t)}_\infty + \sum_{s=1}^{t-1} \norm{ \ell_s}_\infty\right)} \\
    &\qquad\geq -d_t g_t - d_t \sum_{s=1}^{t-1} \norm{ \ell_s}_\infty \geq -d_t (g_t + (t-1))\,,
    \end{align*}
    where we used Holder's inequality, the definitions of $d_t$ and $g_t$, non-negativity of $\hat \ell_s$ and that $\expect\hat \ell_s=\ell_s\in[0,1]$.
    It follows that 
    \begin{align*}
        \Phi_t(P_t) - \Phi_t(P_{t+1}) 
        \leq 
        d_t(g_t + k(t-1)) - D_{\Phi_t}(P_{t+1}, P_t) \,.
    \end{align*}
    Since $p \in \cA_{n+1}$ and $P_{n+1}$ is the minimiser of $\Phi_{n+1}$ in $\cA_{n+1}$, we have
    \begin{align*}
        \Phi_{n+1}(P_{n+1}) - \Phi_{n+1}(p) \leq 0\,.
    \end{align*}
    Finally, noting that $\Phi_1 = F_1$ and $D_{\Phi_t}(P_{t+1}, P_t) = D_{F_t}(P_{t+1}, P_t)$ we obtain
    \begin{align*}
        R_n &\leq n \| p - e_1 \|_1 + \sum_{t=1}^n d_t(g_t + (t-1)) \\
        &\quad+ \EE{\sum_{t=1}^n \ip{P_t - P_{t+1}, \hat \ell_t} - D_{F_t}(P_{t+1}, P_t)} \\
        &\quad+ \EE{F_{n+1}(p) - F_1(P_1)} \\
        &\quad+ \EE{\sum_{t=1}^n\brackets{F_{t}(P_{t+1}) - F_{t+1}(P_{t+1})}},
    \end{align*}
    from which the statement follows.
\end{proof}

\section{FIRST ORDER BOUNDS}

We now introduce the modification of the INF strategy, which
takes inspiration from \cite{CL18,ZiSe18,ZLW19}.
The new algorithm plays on the `chopped' simplex, with the magnitude of the cut dependent on the round,
\begin{equation}
    \cA_t = \Delta^{k-1} \cap [1/t,1]^k\,.
    \label{eq:chopped_simplex}
\end{equation}
Then for a convex potential $f_t(p)$ with $\dom(f_t)^k\cap \Delta^{k-1} \neq \emptyset$ define a potential
\begin{align}
F_t(p) = \frac{1}{\eta_t}\sum_{i=1}^k f_t(p_i)\,,
\label{eq:potential}
\end{align}
where the learning rate $\eta_t$ is given by 
\begin{align}
\eta_t &= \frac{\eta_0}{\sqrt{1 + \sum_{s=1}^{t-1} \hat \ell_{sA_s}^2 (\nabla^2(f_s)(P_{sA_s}))\inv }}\,,
\label{eq:eta}
\end{align}
where $\eta_0$ is positive constant to be tuned later.

The Hessian of the potential plays a fundamental role in the regret, simplifying the derivation of a generic first-order bound:
\begin{theorem}\label{theorem:hessian-first-order}
Suppose that $\nabla^2 f_t$ is decreasing on $(0,1)$ and there
exist $B, C \geq 0$ such that
\[
\frac{1}{p^2 \nabla^2 f_t(p)} \leq B, \quad \E\left[\frac{1}{P_{tA_t}^2 \nabla^2 f_t(P_{tA_t})}\right] \leq C\,,
\]
for all $p \in (0,1)$ and $t \in [n]$. Assume additionally that
there exist a non-negative constant $h_1$ and a non-negative function $h_2(n)$ such that
\[
\begin{split}
        &v_n + \min_{p \in \cA_{n+1}} \brackets{F_{n+1}(p) + \| p - e_1 \|_1 n} - F_1(P_1) \\ &\quad+\sum_{t=1}^n\brackets{F_{t}(P_{t+1}) - F_{t+1}(P_{t+1})} 
        \leq \frac{h_1}{\eta_{n+1}} + h_2(n)\,,
\end{split}
\]
almost surely. Then the expected regret of FTRL with $\eta_0 = \sqrt{h_1} / 2^{1/4}$ simultaneously satisfies
\[
\begin{split}
        &R_n \leq \frac{\sqrt{h_1}}{2^{5/4}} B + h_2(n) + 2\sqrt{2} B h_1 +2^{7/4}\sqrt{h_1}\\
        &\quad\times \sqrt{1 + \frac{B L_{n1}}{2} + \frac{B h_2(n)}{2} + B^2\brackets{\frac{\sqrt{h_1}}{2^{9/4}} + \frac{h_1}{\sqrt{2}}}}\,,\\
        &R_n \leq \frac{\sqrt{h_1}}{2^{5/4}} B + h_2(n) + 2^{7/4}\sqrt{h_1} \sqrt{1 + \frac{Cn}{2}}\,.
\end{split}
\]
\end{theorem}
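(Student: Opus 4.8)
The plan is to instantiate \cref{thm:ftrl} with the chopped simplices $\cA_t$ and the potentials $F_t$ carrying the adaptive learning rate $\eta_t$, use the almost-sure hypothesis to absorb all but one term of that bound, and then close the remaining ``stability'' term by a self-bounding argument. First, $\cA_t=\Delta^{k-1}\cap[1/t,1]^k$ is nondecreasing in $t$ and each $F_t$ is convex with $\dom(F_t)\cap\cA_t\neq\emptyset$, so \cref{thm:ftrl} applies; taking expectations in the assumed inequality and substituting reduces the claim to
\[
R_n\le\E\Bigl[\tfrac{h_1}{\eta_{n+1}}\Bigr]+h_2(n)+\E\Bigl[\sum_{t=1}^n\ip{P_t-P_{t+1},\hat\ell_t}-D_{F_t}(P_{t+1},P_t)\Bigr].
\]

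For the per-round stability term, consider two cases. If $P_{t+1,A_t}\ge P_{tA_t}$ then $\ip{P_t-P_{t+1},\hat\ell_t}=(P_{tA_t}-P_{t+1,A_t})\hat\ell_{tA_t}\le0$ and $D_{F_t}\ge0$, so the term is nonpositive. Otherwise $P_{t+1,A_t}<P_{tA_t}$, and since $F_t=\eta_t^{-1}\sum_i f_t(\cdot)$ is separable with every Bregman term nonnegative, keeping only the $A_t$ coordinate and using that $\nabla^2f_t$ is decreasing on $(0,1)$ gives $D_{F_t}(P_{t+1},P_t)\ge\eta_t^{-1}D_{f_t}(P_{t+1,A_t},P_{tA_t})\ge\tfrac{1}{2\eta_t}\nabla^2f_t(P_{tA_t})(P_{tA_t}-P_{t+1,A_t})^2$. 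Completing the square in $u:=P_{tA_t}-P_{t+1,A_t}\ge0$,
\[
\ip{P_t-P_{t+1},\hat\ell_t}-D_{F_t}(P_{t+1},P_t)\le u\hat\ell_{tA_t}-\tfrac{\nabla^2f_t(P_{tA_t})}{2\eta_t}u^2\le\tfrac{\eta_t}{2}\cdot\tfrac{\hat\ell_{tA_t}^2}{\nabla^2f_t(P_{tA_t})}=\tfrac{\eta_t}{2}x_t\,,
\]
where $x_t:=\hat\ell_{tA_t}^2/\nabla^2f_t(P_{tA_t})=\ell_{tA_t}^2/\bigl(P_{tA_t}^2\nabla^2f_t(P_{tA_t})\bigr)$. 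Hence $\ip{P_t-P_{t+1},\hat\ell_t}-D_{F_t}(P_{t+1},P_t)\le\tfrac{\eta_t}{2}x_t$ always, and the first hypothesis gives $x_t\le B\ell_{tA_t}^2\le B$.

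Next write $S_t=\sum_{s\le t}x_s$, so that $\eta_t=\eta_0/\sqrt{1+S_{t-1}}$ and $h_1/\eta_{n+1}=(h_1/\eta_0)\sqrt{1+S_n}$. The key elementary fact is a learning-rate lemma: for $x_t\in[0,B]$,
\[
\sum_{t=1}^n\frac{x_t}{\sqrt{1+S_{t-1}}}\le B+2\sqrt2\,\sqrt{1+S_n}\,,
\]
obtained by telescoping $\sum_t\bigl(\sqrt{1+S_t}-\sqrt{1+S_{t-1}}\bigr)$ and absorbing the gap caused by the $t{-}1$ (rather than $t$) in the denominator into the additive $B$ via $\max_t x_t\le B$. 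Combining with the previous display, $\tfrac12\sum_t\eta_t x_t\le\tfrac{\eta_0 B}{2}+\sqrt2\,\eta_0\sqrt{1+S_n}$, and choosing $\eta_0=\sqrt{h_1}/2^{1/4}$ to balance $h_1/\eta_0$ against $\sqrt2\,\eta_0$ yields $\tfrac{h_1}{\eta_{n+1}}+\tfrac12\sum_t\eta_t x_t\le\tfrac{B\sqrt{h_1}}{2^{5/4}}+2^{5/4}\sqrt{h_1}\,\sqrt{1+S_n}$. Taking expectations and applying Jensen to the concave square root gives
\[
R_n\le\frac{B\sqrt{h_1}}{2^{5/4}}+h_2(n)+2^{5/4}\sqrt{h_1}\,\sqrt{1+\E[S_n]}\,.
\]

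Finally, bound $\E[S_n]$ in two ways. Using $\ell_{ti}\le1$ and the tower rule, $\E[x_t]\le\E\bigl[1/(P_{tA_t}^2\nabla^2f_t(P_{tA_t}))\bigr]\le C$, so $\E[S_n]\le Cn$; combined with $\sqrt{1+Cn}\le\sqrt2\,\sqrt{1+\tfrac12 Cn}$ and $2^{5/4}\sqrt2=2^{7/4}$ this is the second displayed bound. For the first-order bound use instead $x_t\le B\ell_{tA_t}^2\le B\ell_{tA_t}$, so $\E[S_n]\le B\,\E\bigl[\sum_t\ell_{tA_t}\bigr]=B(L_{n1}+R_n)$, the last equality because the realised cumulative loss equals $L_{n1}+\hat R_n$ under the standing optimality assumption. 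Substituting and using $\sqrt{1+B(L_{n1}+R_n)}\le\sqrt2\,\sqrt{1+\tfrac12 BL_{n1}+\tfrac12 BR_n}$ turns the display into a self-referential inequality $R_n\le A+D\sqrt{c+fR_n}$ with $A=\tfrac{B\sqrt{h_1}}{2^{5/4}}+h_2(n)$, $D=2^{7/4}\sqrt{h_1}$, $c=1+\tfrac12 BL_{n1}$, $f=\tfrac12 B$; solving the quadratic via the standard implication $R_n\le A+D\sqrt{c+fR_n}\Rightarrow R_n\le A+\tfrac{D^2f}{2}+D\sqrt{c+Af+\tfrac{D^2f^2}{4}}$ and expanding reproduces exactly the first displayed bound. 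The step I expect to be the real work is the stability estimate: one must handle the actual constrained iterate $P_{t+1}$ rather than an idealised one-step update, and it is precisely the case split above together with monotonicity of $\nabla^2f_t$ that lets the local norm be pinned at $P_{tA_t}$ — which is what makes both $x_t\le B$ and the self-bounding through $\ell_{tA_t}$ available; thereafter the learning-rate lemma and the quadratic solve are routine, though the $2^{1/4},2^{5/4},2^{7/4}$ bookkeeping needs care.
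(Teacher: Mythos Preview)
Your proposal is correct and follows essentially the same route as the paper: combine \cref{thm:ftrl} with the assumed almost-sure bound, control the stability term coordinatewise via the monotonicity of $\nabla^2 f_t$ to get $\ip{P_t-P_{t+1},\hat\ell_t}-D_{F_t}(P_{t+1},P_t)\le \tfrac{\eta_t}{2}x_t$ (the paper packages this as \cref{lemma:dual-norm-bound}, citing a dual-norm inequality rather than your direct square completion), apply a learning-rate/sum lemma of the form $\sum_t x_t/\sqrt{1+S_{t-1}}\lesssim B+\sqrt{1+S_n}$ (the paper's \cref{lemma:sqrt_sum_bound}), take expectations with Jensen, and close with either $\E[x_t]\le C$ or the self-bounding $\E[S_n]\le B(L_{n1}+R_n)$ followed by the quadratic solve. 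The only cosmetic difference is the constant in your learning-rate lemma ($2\sqrt{2}\sqrt{1+S_n}$ versus the paper's $4\sqrt{1+S_n/2}$); either form reproduces the stated $2^{5/4},2^{7/4}$ coefficients after the same $\sqrt{1+u}\le\sqrt{2}\sqrt{1+u/2}$ loosening.
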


\begin{remark}
    $h_1$ and $h_2(n)$ reflect the approximation error, non-stationarity of the potential $f_t$ and how sensitive it is to the changes in $\cA_t$. In a simple case with $\cA_t = \cA,\ f_t = f$ for all $t$, this is a standard bound for the sum of the potential differences. $h_1$ can be a function of $n$ when the horizon $n$ is known, as we choose the learning rate based on it.
\end{remark}

As an application of this general first-order result, we derive a worst-case optimal bound for a carefully chosen mixture of the INF regularizer and the log-barrier:
\begin{corollary}\label{corollary:hybrid-first-order}
    For $\eta_0 = k^{1/4}\sqrt{\frac{13}{3\sqrt{2}} + \frac{3}{\sqrt{2}q}}$ and 
    \[
    f_t(p) = -2\sqrt{p} - \frac{\log p}{\sqrt{k}\log^{1+q} \max\{3, t\}}
    \]
    and any $q > 0$ and $n\geq 3$, the regret grows with $n$ as
    \[
        R_n = \cO\left(\sqrt{k L_{n1}\log^{1+q}(n) + k^{2}\log^{2(1+q)} n} + k\log^{1+q}(n)\right)\,,
    \]
    with some constants proportional to $1/q$.
\end{corollary}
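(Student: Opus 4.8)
The plan is to apply Theorem~\ref{theorem:hessian-first-order} with the concrete choice $f_t(p) = -2\sqrt{p} - \frac{\log p}{\sqrt{k}\log^{1+q}\max\{3,t\}}$, so the work splits into two independent pieces: (i) verifying the pointwise and in-expectation bounds on $\frac{1}{p^2\nabla^2 f_t(p)}$ to extract the constants $B$ and $C$; and (ii) bounding the ``potential + approximation + non-stationarity'' bracket by $h_1/\eta_{n+1} + h_2(n)$ almost surely, so as to read off $h_1$ and $h_2(n)$. First I would compute $\nabla^2 f_t(p) = \frac{1}{2}p^{-3/2} + \frac{1}{\sqrt{k}\log^{1+q}\max\{3,t\}}p^{-2}$, which is manifestly decreasing on $(0,1)$. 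Then $p^2\nabla^2 f_t(p) = \frac{1}{2}\sqrt{p} + \frac{1}{\sqrt{k}\log^{1+q}\max\{3,t\}} \geq \frac{1}{\sqrt{k}\log^{1+q}(n)}$ for $t\le n$, giving $B = O(\sqrt{k}\log^{1+q}(n))$ — this is where the log-barrier term does its job, since the pure $1/2$-Tsallis part alone would blow up as $p\to 0$. For $C$, I would use $\E[1/(P_{tA_t}^2\nabla^2 f_t(P_{tA_t}))] \le \E[\sum_i P_{ti} \cdot 1/(P_{ti}^2 \nabla^2 f_t(P_{ti}))]$; since $\nabla^2 f_t(P_{ti}) \ge \frac12 P_{ti}^{-3/2}$ this is $\le \E[\sum_i P_{ti}\cdot 2\sqrt{P_{ti}}] \le 2\sqrt{k}$ by Cauchy--Schwarz on the simplex, so $C = O(\sqrt{k})$, independent of $n$.

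The second piece is the bulk of the routine work. I would bound each term in the bracket separately and argue that the aggregate is $\le h_1/\eta_{n+1} + h_2(n)$ with $h_1 = O(k\log^{1+q}(n))$ (matching the stated $\eta_0^2 = \sqrt{h_1}$ up to the explicit constant $k^{1/2}(\frac{13}{3\sqrt 2} + \frac{3}{\sqrt 2 q})$) and $h_2(n) = O(k\log^{1+q}(n))$ — or possibly absorbable into $h_1$. Concretely: $v_n = \sum_t d_t(g_t + t - 1)$ where $d_t = \max_{y\in\cA_{t+1}}\min_{x\in\cA_t}\|x-y\|_1$ is the cost of shrinking from $[1/(t+1),1]^k$ to $[1/t,1]^k$, which is $O(k/t^2)$, and $g_t = \sup_{x\in\cA_t}\|\nabla F_t(x)\|_\infty$; since $\nabla f_t(p) = -p^{-1/2} - \frac{1}{\sqrt k\log^{1+q}\max\{3,t\}}p^{-1}$ and $p \ge 1/t$ on $\cA_t$, we get $g_t = O(\sqrt t/\eta_t)$, so $\sum_t d_t g_t$ telescopes to something like $O(k/\eta_{n+1})$ up to lower-order terms, and $\sum_t d_t(t-1) = O(k\log n)$. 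The approximation term: take $p = (1 - (k-1)/(n+1))e_1 + \frac{1}{n+1}\sum_{i\ge 2}e_i \in \cA_{n+1}$, so $\|p - e_1\|_1 = O(k/n)$ makes $n\|p-e_1\|_1 = O(k)$, while $F_{n+1}(p) = \frac{1}{\eta_{n+1}}\sum_i f_{n+1}(p_i)$; the $-2\sqrt{p_i}$ part is $O(1/\eta_{n+1})$ and the $-\log(p_i)/(\sqrt k\log^{1+q}(n+1))$ part contributes $O(\frac{k\log(n)}{\sqrt k\log^{1+q}(n)}\cdot\frac{1}{\eta_{n+1}}) = O(\sqrt k\log^{-q}(n)/\eta_{n+1})$, comfortably inside $h_1/\eta_{n+1}$ — the $\log^{1+q}$ in the denominator is precisely calibrated to keep the log-barrier's contribution bounded. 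Finally $\sum_t(F_t(P_{t+1}) - F_{t+1}(P_{t+1}))$: the $1/\eta_t$ factors only decrease, and the only genuinely nonstationary piece is the $\log^{1+q}\max\{3,t\}$ in $f_t$, whose difference across consecutive $t$ is $O(1/(t\log^q t))$ times $-\log(P_{t+1,i})/\sqrt k \le O(\log(t)/\sqrt k)$ per coordinate, summing to $O(\sqrt k\sum_t \frac{1}{t\log^{q}(t)})$ which is $O(\sqrt k)$ or $O(\sqrt k\log^{1-q}(n))$ and in any case $O(k\log^{1+q}(n))$; I'd fold this into $h_2(n)$.

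With $B = O(\sqrt k\log^{1+q}(n))$, $C = O(\sqrt k)$, $h_1 = O(k\log^{1+q}(n))$, $h_2(n) = O(k\log^{1+q}(n))$ in hand, I would simply substitute into the two displayed bounds of Theorem~\ref{theorem:hessian-first-order} and collect orders. The second bound gives $R_n = O(\sqrt{h_1(1 + Cn)} + B + h_2(n)) = O(\sqrt{k\log^{1+q}(n)\cdot\sqrt k\, n}) = O(k^{3/4}\sqrt{n}\log^{(1+q)/2}(n))$, which is the worst-case guarantee (near-minimax up to the $\log^{(1+q)/2}$ factor and a $k^{1/4}$ — I should double-check whether the intended worst-case claim is $O(\sqrt{kn})$ or this slightly weaker form; the stated corollary only asserts the first-order expression). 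The first bound gives $R_n = O(Bh_1 + \sqrt{h_1}\sqrt{1 + BL_{n1} + Bh_2(n) + B^2\cdot O(h_1)})$; with $\sqrt{h_1}\sqrt{BL_{n1}} = O(\sqrt{k\log^{1+q}(n)}\cdot\sqrt{\sqrt k\log^{1+q}(n)L_{n1}}) = O(\sqrt{k^{3/2}L_{n1}}\log^{1+q}(n))$ — here I need to be careful that this simplifies to the claimed $O(\sqrt{kL_{n1}\log^{1+q}(n)})$, which suggests the intended $B$ is actually $O(\log^{1+q}(n))$ with the $\sqrt k$ living elsewhere, so I would re-examine the normalisation of the log-barrier coefficient (the $1/\sqrt k$ in $f_t$) to make the powers of $k$ land correctly — and $\sqrt{h_1}\sqrt{B^2 h_1} = Bh_1 = O(k^2\log^{2(1+q)}(n))$ after taking the square root becomes the $O(\sqrt{k^2\log^{2(1+q)}(n)}) = O(k\log^{1+q}(n))$ additive term, matching the $k\log^{1+q}(n)$ in the claim. \textbf{The main obstacle} I anticipate is not any single estimate but the bookkeeping of constants: getting $h_1$ to equal exactly $\eta_0^2 = k^{1/2}(\frac{13}{3\sqrt 2} + \frac{3}{\sqrt 2 q})$ requires summing the $d_t g_t$ telescoping series, the $F_{n+1}(p)$ term and the nonstationarity term with their precise leading constants and verifying the $1/q$ dependence comes out as advertised — in particular tracking where the $13/(3\sqrt 2)$ and $3/(\sqrt 2 q)$ split originates (presumably the Tsallis part and the log-barrier/nonstationary part respectively), and confirming that all the stray powers of $k$ from $B$, $C$ and the simplex Cauchy--Schwarz steps combine to give exactly the $\sqrt{kL_{n1}}$ and $k$ in the final bound rather than $k^{3/4}$ or $k^{5/4}$.
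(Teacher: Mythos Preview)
Your overall architecture is exactly the paper's: verify the Hessian conditions of Theorem~\ref{theorem:hessian-first-order} (this is the paper's Lemma~\ref{lemma:hybrid_B}), then bound $v_n$ and the potential-difference bracket (Lemmas~\ref{lemma:v_n-bound} and~\ref{lemma:hybrid-potential-diff}) to read off $h_1,h_2(n)$, then substitute. Your computations of $B=\sqrt k\log^{1+q}n$ and $C=2\sqrt k$ are correct and match the paper.

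The genuine gap is the size of $h_1$. You estimate $h_1 = O(k\log^{1+q}(n))$ and then notice that this does not reproduce the stated $\eta_0$ (which forces $h_1=\sqrt k\bigl(\tfrac{13}{3}+\tfrac{3}{q}\bigr)$, a constant in $n$) nor the claimed powers of $k$. You attribute the mismatch to a normalisation of $B$; that is a misdiagnosis. The actual point is that $h_1=O(\sqrt k)$, and this is where the proof has content beyond bookkeeping. Three mechanisms, each of which you miss or blur, conspire to give $\sqrt k$ rather than $k$ (and no $\log$ factor):
\begin{itemize}
\item In $v_n$, your estimate $\sum_t d_t g_t = O(k/\eta_{n+1})$ is too loose by $\sqrt k$. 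With $d_t=O(k/t^2)$ and the Tsallis part of $g_t$ of order $\sqrt t/\eta_t$, the sum is $\sum_t \frac{k}{\eta_t t^{3/2}}$. This does not telescope; the paper splits it at $t_\circ=9k$: for $t\le t_\circ$ one uses the crude bound $1/\eta_t\le\eta_0^{-1}\sqrt{1+t\sqrt k\log^{1+q}t}$ and dumps the result into $h_2(n)$, while for $t>t_\circ$ one uses $\sum_{t>9k}k\,t^{-3/2}=O(\sqrt k)$ and pulls out $1/\eta_{n+1}$, yielding the $\tfrac{4\sqrt k}{3\eta_{n+1}}$ contribution to $h_1$. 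The log-barrier part of $g_t$, namely $t/(\sqrt k\log^{1+q}t)$, similarly gives $\sum_t \frac{\sqrt k}{\eta_t t\log^{1+q}t}\le \frac{\sqrt k}{\eta_{n+1}}\cdot\frac{2}{q}$ by comparison with $\int dt/(t\log^{1+q}t)$; this is the origin of the $1/q$ dependence.
\item In the INF telescoping $\sum_t(F_t-F_{t+1})(P_{t+1})$ together with $F_{n+1}(p)-F_1(P_1)$, the Tsallis piece contributes $2\sum_i\sqrt{p_i}\cdot(1/\eta_{n+1}-1/\eta_1)\le 2\sqrt k/\eta_{n+1}$ by Cauchy--Schwarz on the simplex, not $2k/\eta_{n+1}$.
\item The log-barrier piece of $F_{n+1}(p)$ is $\frac{1}{\eta_{n+1}\sqrt k\log^{1+q}(n+1)}\sum_i(-\log p_i)\le \frac{\sqrt k}{\eta_{n+1}}$ because $p\in\cA_{n+1}$ forces $-\log p_i\le\log(n+1)$ and the $1/\sqrt k$ normalisation turns the $k$ coordinates into $\sqrt k$. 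The nonstationary log-barrier differences contribute another $\sqrt k/(q\,\eta_{n+1})$.
\end{itemize}
Summing gives $h_1=\sqrt k\bigl(\tfrac{4}{3}+\tfrac{2}{q}+2+1+\tfrac{1}{q}\bigr)=\sqrt k\bigl(\tfrac{13}{3}+\tfrac{3}{q}\bigr)$. With this $h_1$, the substitution you outline goes through cleanly: $\sqrt{h_1}\sqrt{BL_{n1}}=O(k^{1/4}\cdot k^{1/4}\sqrt{L_{n1}\log^{1+q}n})=O(\sqrt{kL_{n1}\log^{1+q}n})$, and the worst-case bound becomes $O(\sqrt{h_1 C n})=O(\sqrt{kn})$ with no log factor, which is the minimax-optimal rate the paper advertises. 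Your $h_1=O(k\log^{1+q}n)$ would instead yield $k^{3/4}\sqrt n\log^{(1+q)/2}n$, as you computed, which is strictly worse and not what the corollary claims.
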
 

\begin{corollary}\label{corollary:hybrid-first-order-q}
    For $q=1$, $\eta_0 = k^{1/4}\sqrt{22 / (3\sqrt{2})}$,% and $n\geq 3$,
    \begin{align*}
        &R_n \leq 19 k^2+22 k \log ^2(n)+2 k \log (n) + 6.5 \log (n)\\
        &\quad\times \sqrt{k L_{n1} + 19 k^3 +2 k^2 \log (n)+11.2 k^2 \log ^2(n)}\,.
    \end{align*}
    In the worst-case scenario the regret satisfies
    \[
        \limsup_{n\rightarrow \infty} \frac{R_n}{\sqrt{kn}} \leq 9.2.
    \]
\end{corollary}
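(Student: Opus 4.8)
The plan is to recognise \cref{corollary:hybrid-first-order-q} as the $q=1$ instance of \cref{corollary:hybrid-first-order} and to recover the \emph{explicit} constants (rather than the $O(\cdot)$ statement) by applying \cref{theorem:hessian-first-order} directly to the regulariser $f_t(p)=-2\sqrt p-\log(p)/(\sqrt k\,\log^2\max\{3,t\})$ on the chopped simplex $\cA_t=\Delta^{k-1}\cap[1/t,1]^k$. First I would differentiate: $\nabla^2 f_t(p)=\tfrac12 p^{-3/2}+p^{-2}/(\sqrt k\,\log^2\max\{3,t\})$, which is positive and strictly decreasing on $(0,1)$, so $f_t$ is an admissible potential and the monotonicity hypothesis of \cref{theorem:hessian-first-order} holds (modulo the standard treatment of the first $O(k)$ rounds, where $\cA_t$ is empty or trivial).

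Next I would read off the constants $B$ and $C$. From $p^2\nabla^2 f_t(p)=\tfrac12\sqrt p+1/(\sqrt k\,\log^2\max\{3,t\})\ge 1/(\sqrt k\,\log^2 n)$ for all $p\in(0,1)$ and $t\le n$ (using $n\ge 3$), one may take $B=\sqrt k\,\log^2 n$. For $C$, conditioning on $\cF_{t-1}$ and discarding the log-barrier part of the Hessian gives $\E[(P_{tA_t}^2\nabla^2 f_t(P_{tA_t}))^{-1}\mid\cF_{t-1}]\le\sum_i P_{ti}/(\tfrac12\sqrt{P_{ti}})=2\sum_i\sqrt{P_{ti}}\le 2\sqrt k$ by Cauchy--Schwarz, so $C=2\sqrt k$.

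The technical core is to bound the bracketed quantity in the hypothesis of \cref{theorem:hessian-first-order} by $h_1/\eta_{n+1}+h_2(n)$ with sharp numbers, sorting contributions into those scaling like $1/\eta_{n+1}$ (these make up $h_1$) and the remainder (this makes up $h_2(n)$). For this I would: bound $d_t\le 2k/t^2$, since passing from $\cA_t$ to $\cA_{t+1}$ only lowers the floor from $1/t$ to $1/(t+1)$; compute $g_t=\eta_t^{-1}(\sqrt t+t/(\sqrt k\,\log^2\max\{3,t\}))$, the supremum of $\|\nabla F_t\|_\infty$ over $\cA_t$ being attained at a boundary coordinate equal to $1/t$; use the deterministic learning-rate bound $\eta_t^{-1}=O(\sqrt t\,\log t)$ coming from $\hat\ell_{sA_s}^2(\nabla^2 f_s(P_{sA_s}))^{-1}\le B$, so that $v_n=\sum_t d_t(g_t+t-1)$ is $o(\sqrt{kn})$ and enters $h_2(n)$; evaluate the boundary/approximation term at $p=(1-(k-1)/(n+1))e_1+(n+1)^{-1}\sum_{i\ne 1}e_i\in\cA_{n+1}$, which gives $n\|p-e_1\|_1\le 2k$ and $F_{n+1}(p)\le \eta_{n+1}^{-1}\cdot O(\sqrt k)$; bound $-F_1(P_1)\le 2\sqrt k/\eta_0\le 2\sqrt k/\eta_{n+1}$ using $\eta_{n+1}\le\eta_0$; and control $\sum_t(F_t(P_{t+1})-F_{t+1}(P_{t+1}))$ by splitting it into the Tsallis part, whose learning-rate increments telescope to at most $2\sqrt k/\eta_{n+1}$ because $|2\sum_i\sqrt{P_{t+1,i}}|\le 2\sqrt k$, and the log-barrier part, whose coefficient changes telescope similarly. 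Collecting the $1/\eta_{n+1}$ coefficients yields $h_1=\tfrac{22}{3}\sqrt k$ — exactly the value for which the prescribed $\eta_0=k^{1/4}\sqrt{22/(3\sqrt 2)}$ equals $\sqrt{h_1}/2^{1/4}$ — and leaves an explicit $h_2(n)$ of the order implicit in the statement.

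Finally I would substitute $B=\sqrt k\,\log^2 n$, $C=2\sqrt k$, $h_1=\tfrac{22}{3}\sqrt k$ and the explicit $h_2(n)$ into the two displayed bounds of \cref{theorem:hessian-first-order}, simplify the dyadic constants $2^{5/4},2^{7/4},2^{9/4}$ and regroup to obtain the stated inequality. For the worst-case claim I would use the \emph{second} bound, $R_n\le 2^{-5/4}\sqrt{h_1}\,B+h_2(n)+2^{7/4}\sqrt{h_1}\sqrt{1+Cn/2}$: since $B=\sqrt k\,\log^2 n=o(\sqrt{kn})$ and $h_2(n)=o(\sqrt{kn})$ for fixed $k$, dividing by $\sqrt{kn}$ and letting $n\to\infty$ leaves $\limsup_n R_n/\sqrt{kn}\le 2^{7/4}\sqrt{h_1 C/(2k)}=2^{7/4}\sqrt{22/3}<9.2$. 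The main obstacle is the third step: turning the abstract inequality into numerical constants, in particular bounding $v_n$ and the potential-difference sum for the chopped simplex combined with the growing log-barrier gradient and the shrinking learning rate, without either inflating $h_1$ from order $\sqrt k$ to order $k$ (which would ruin the $9.2$) or losing the $o(\sqrt{kn})$ control on $h_2(n)$.
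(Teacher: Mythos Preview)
Your proposal is correct and follows essentially the same route as the paper: specialise the setting of \cref{corollary:hybrid-first-order} to $q=1$, invoke \cref{lemma:hybrid_B}, \cref{lemma:v_n-bound} and \cref{lemma:hybrid-potential-diff} (which are exactly the computations you sketch for $B$, $C$, $v_n$ and the potential-difference sum) to obtain $B=\sqrt{k}\log^2 n$, $C=2\sqrt{k}$, $h_1=\sqrt{k}(13/3+3)=\tfrac{22}{3}\sqrt{k}$ and the explicit $h_2(n)$, then plug into \cref{theorem:hessian-first-order} with $\eta_0=\sqrt{h_1}/2^{1/4}$ and use the second displayed bound for the $\limsup$ claim. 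The paper's own proof is the one-liner ``set $q=1$ and upper-bound the numerical coefficients,'' so your more explicit derivation simply unpacks that; one minor point is that the INF contribution of $-F_1(P_1)$ and the INF part of $\sum_t(F_t(P_{t+1})-F_{t+1}(P_{t+1}))$ should be telescoped together into a single $2\sqrt{k}/\eta_{n+1}$, not counted separately, but since you report the correct $h_1$ this is just a wording issue.
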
 

\begin{corollary}\label{corollary:hybrid-known-horizon}
    If the horizon $n\geq 3$ is known in advance, using $\mathcal{A}_t = \Delta^{k-1} \cap [1/n, 1]^k$, $\eta_0 = k^{1/4}\sqrt{3}/2^{1/4}$ and 
    \[
    f_t(p) = -2\sqrt{p} - \frac{\log p}{\sqrt{k}\log n}
    \]
    results in
    \begin{align*}
        &R_n \leq k + 9.1 k \log n \\
        &\qquad + 4.2  \sqrt{k L_{n1}\log (n) + 2\sqrt{k} + 6 k^2 \log ^2(n)}\,,\\
        &\qquad\qquad\qquad\limsup_{n\rightarrow \infty} \frac{R_n}{\sqrt{kn}} \leq 5.9\,.
    \end{align*}
\end{corollary}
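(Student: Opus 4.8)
The plan is to instantiate \cref{theorem:hessian-first-order} with the choices of the corollary. Since the horizon is known, $f_t\equiv f$ with $f(p)=-2\sqrt p-\log(p)/(\sqrt k\log n)$ and $\cA_t\equiv\cA=\Delta^{k-1}\cap[1/n,1]^k$ are time independent, and $\eta_0=k^{1/4}\sqrt3/2^{1/4}$. First I would compute $\nabla^2 f(p)=\tfrac12 p^{-3/2}+p^{-2}/(\sqrt k\log n)$, which is decreasing on $(0,1)$. Consequently $p^2\nabla^2 f(p)=\tfrac12\sqrt p+1/(\sqrt k\log n)$, so $1/(p^2\nabla^2 f(p))\le\sqrt k\log n=:B$; and conditioning on $\cF_{t-1}$, $\E[1/(P_{tA_t}^2\nabla^2 f(P_{tA_t}))]=\sum_{i=1}^k 1/(P_{ti}\nabla^2 f(P_{ti}))\le 2\sum_{i=1}^k\sqrt{P_{ti}}\le 2\sqrt k=:C$ by Cauchy--Schwarz. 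These are exactly the two quantities \cref{theorem:hessian-first-order} asks for; note that $C$ already produces the $2\sqrt k$ appearing inside the stated root.

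The second step is to verify the almost-sure hypothesis of \cref{theorem:hessian-first-order} with $h_1=3\sqrt k$ and $h_2(n)$ of order $k$. Since $\cA_t$ is constant, $d_t=0$ and $v_n=0$. Since $f$ is constant, $F_t(P_{t+1})-F_{t+1}(P_{t+1})=(1/\eta_t-1/\eta_{t+1})\sum_{i=1}^k f(P_{t+1,i})$, the learning rate is nonincreasing so $1/\eta_t-1/\eta_{t+1}\le0$, and $\sum_{i=1}^k f(q_i)\ge-2\sqrt k$ for every $q\in\Delta^{k-1}$ (drop the nonnegative term $-\sum_i\log(q_i)/(\sqrt k\log n)$ and use $\sum_i\sqrt{q_i}\le\sqrt k$). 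Multiplying $\sum_i f(P_{t+1,i})\ge-2\sqrt k$ by the nonpositive factor and telescoping gives $\sum_{t=1}^n(F_t(P_{t+1})-F_{t+1}(P_{t+1}))\le 2\sqrt k(1/\eta_{n+1}-1/\eta_1)$, and since $F_1(P_1)=\sum_i f(P_{1,i})/\eta_1\ge-2\sqrt k/\eta_1$ the $1/\eta_1$ terms cancel against $-F_1(P_1)$, leaving $2\sqrt k/\eta_{n+1}$. For the approximation term I would take the feasible comparator $p=(1-(k-1)/n)e_1+n^{-1}\sum_{i\ne1}e_i\in\cA$, for which $n\norm{p-e_1}_1=2(k-1)$ and $\sum_i f(p_i)=f(1-(k-1)/n)+(k-1)f(1/n)\le 0+(k-1)/\sqrt k\le\sqrt k$ (using $f(1-(k-1)/n)\le0$ and $f(1/n)=-2/\sqrt n+1/\sqrt k$), hence $F_{n+1}(p)\le\sqrt k/\eta_{n+1}$. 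Summing the pieces, the bracketed quantity in the hypothesis is at most $3\sqrt k/\eta_{n+1}+2(k-1)$ almost surely, so $h_1=3\sqrt k$ and $h_2(n)=2(k-1)$ are valid, and $\sqrt{h_1}/2^{1/4}=k^{1/4}\sqrt3/2^{1/4}=\eta_0$ as required.

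It then remains to substitute $B=\sqrt k\log n$, $C=2\sqrt k$, $h_1=3\sqrt k$, $h_2(n)\le 2k$ into the two displayed bounds of \cref{theorem:hessian-first-order} and simplify. For the explicit bound one uses $\sqrt{h_1}=\sqrt3\,k^{1/4}$, $2\sqrt2\,Bh_1=6\sqrt2\,k\log n$, $2^{7/4}\sqrt{h_1}\sqrt{BL_{n1}/2}=2^{5/4}\sqrt3\,\sqrt{kL_{n1}\log n}$, and $k^{3/4}\le k$ to merge the smaller $\log n$-terms; collecting the constants then gives the first displayed inequality. For the asymptotic statement I would instead use the \emph{second} bound of the theorem, $R_n\le\tfrac{\sqrt{h_1}}{2^{5/4}}B+h_2(n)+2^{7/4}\sqrt{h_1}\sqrt{1+Cn/2}$: here the leading term is $2^{7/4}\sqrt{h_1}\sqrt{Cn/2}=2^{7/4}\sqrt3\,\sqrt{kn}$, while $\tfrac{\sqrt{h_1}}{2^{5/4}}B=O(k^{3/4}\log n)$ and $h_2(n)=O(k)$ are $o(\sqrt{kn})$, so $\limsup_{n\to\infty}R_n/\sqrt{kn}\le2^{7/4}\sqrt3=5.82\ldots<5.9$.

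The main obstacle is the second step: isolating the clean constants $h_1=3\sqrt k$ and $h_2(n)=O(k)$ from the potential-difference block, which requires careful sign tracking for the nonincreasing learning rate inside $F_t-F_{t+1}$ and a good choice of boundary comparator $p\in\cA$ (since $e_1\notin\cA$ one cannot compare to $e_1$ directly). Everything else is substitution and bounding powers of two.
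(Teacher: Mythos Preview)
Your proposal is correct and follows essentially the same route as the paper: time-independence of $\cA_t$ and $f_t$ gives $v_n=0$, the potential-difference block telescopes to $3\sqrt{k}/\eta_{n+1}$ plus an $O(k)$ approximation term, and plugging $B=\sqrt{k}\log n$, $C=2\sqrt{k}$, $h_1=3\sqrt{k}$ into \cref{theorem:hessian-first-order} yields both displayed bounds (with $2^{7/4}\sqrt{3}\approx5.83$ for the limsup). The only cosmetic difference is that the paper splits the INF and log-barrier contributions and reuses the argument of \cref{lemma:hybrid-potential-diff}, whereas you bound the combined $f$ directly; your $h_2(n)=2(k-1)$ is in fact the correct value of $n\|p-e_1\|_1$ at the corner comparator, so your additive constant is $2(k-1)$ rather than the paper's $k$.
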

The proof of the last corollary simply repeats previous statements, also using the stationarity of the constraint set and $f_t(p)$. See
\ifsup
\cref{app:fo-corollary} 
\else
the supplementary material 
\fi
for more details.

\begin{remark}
    \cref{theorem:hessian-first-order} with known $n$ reproduces the result of \citep{CL18} (note that they used a slightly different algorithm and the learning rate schedule): for the log-barrier potential $f_t(p)=-\log p$ we have $B=C=1$ and $h_1(n)\propto k\log n$, such that the worst-case regret is $R_n=O(\sqrt{kn\log n})$.
\end{remark}

The proof of \cref{theorem:hessian-first-order} follows from \cref{thm:ftrl} and the following lemmas:
\begin{lemma}\label{lemma:dual-norm-bound}
    For a potential of the form \cref{eq:potential} with $\nabla^2 f_t(p)$ that is monotonically decreasing on $p\in (0, 1)$,  
    \begin{align*}
        &\sum_{t=1}^n \dotprod{P_t - P_{t+1}}{\hat \ell_t} - D_{F_t}(P_{t+1}, P_t) \\
        &\qquad\qquad\leq\sum_{t=1}^n\frac{\eta_t}{2}\frac{\ell_{tA_t}^2}{P_{tA_t}^2\nabla^2 f_t(P_{tA_t})}\,.
    \end{align*}
\end{lemma}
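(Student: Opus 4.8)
The plan is to run the standard follow-the-regularised-leader stability argument termwise and pathwise: bound each summand $\dotprod{P_t - P_{t+1}}{\hat\ell_t} - D_{F_t}(P_{t+1}, P_t)$ by a local quadratic form in $\hat\ell_t$ and then exploit that $F_t$ is separable and that $\nabla^2 f_t$ is monotone. Fix a round $t$ and recall that $\hat\ell_t = \hat\ell_{tA_t}\, e_{A_t}$ has a single nonzero coordinate, with $\hat\ell_{tA_t} = \ell_{tA_t}/P_{tA_t} \ge 0$, and that the algorithm keeps $P_t, P_{t+1}$ in the relative interior of the simplex. The first step I would take is to replace $P_{t+1}$ by an unconstrained maximiser: the map $q \mapsto \dotprod{P_t - q}{\hat\ell_t} - D_{F_t}(q, P_t)$ is concave (it is affine in $q$ minus $F_t(q)$), so over $\dom(F_t)$ it is maximised at the point $q^\star$ with $\nabla F_t(q^\star) = \nabla F_t(P_t) - \hat\ell_t$. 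Because $F_t(q) = \eta_t^{-1}\sum_i f_t(q_i)$ is separable and each $f_t$ is essentially smooth with $f_t' \to -\infty$ as its argument tends to $0$ (true for the potentials of \cref{tab:pot} and their positive combinations), this $q^\star$ lies in $\dom(F_t)$, agrees with $P_t$ in every coordinate but $A_t$, and has $q^\star_{A_t} \in (0, P_{tA_t})$ since $f_t'(q^\star_{A_t}) = f_t'(P_{tA_t}) - \eta_t \hat\ell_{tA_t} < f_t'(P_{tA_t})$. The classical Fenchel identity then gives $\dotprod{P_t - q^\star}{\hat\ell_t} - D_{F_t}(q^\star, P_t) = D_{F_t^*}(\nabla F_t(P_t) - \hat\ell_t,\, \nabla F_t(P_t))$, and since $P_{t+1} \in \dom(F_t)$, the summand is at most this dual Bregman divergence.

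Next I would evaluate that dual divergence by a second-order Taylor expansion. Since $F_t^*$ is separable its Hessian is diagonal, and since $\nabla F_t(P_t) - \hat\ell_t$ and $\nabla F_t(P_t)$ differ only in the $A_t$-coordinate, there is an intermediate point $\xi$ on the segment between them with $D_{F_t^*}(\nabla F_t(P_t) - \hat\ell_t, \nabla F_t(P_t)) = \tfrac12 \hat\ell_{tA_t}^2 \, [\nabla^2 F_t^*(\xi)]_{A_tA_t}$. Applying the inverse-function relation $(g^*)'' = 1/\bigl(g'' \circ (g^*)'\bigr)$ to $g = \eta_t^{-1} f_t$, together with $g'' = \eta_t^{-1} f_t''$, rewrites this as $\tfrac{\eta_t}{2}\,\hat\ell_{tA_t}^2 / \nabla^2 f_t(z)$, where $z$ is the image of $\xi_{A_t}$ under $(\eta_t^{-1} f_t')^{-1}$. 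Then I would locate $z$: the $A_t$-coordinate of $\xi$ lies between $\eta_t^{-1} f_t'(P_{tA_t}) - \hat\ell_{tA_t}$ and $\eta_t^{-1} f_t'(P_{tA_t})$, hence is at most $\eta_t^{-1} f_t'(P_{tA_t})$, so the increasing inverse map gives $z \le P_{tA_t}$, with both in $(0,1)$. The hypothesis that $\nabla^2 f_t$ is decreasing on $(0,1)$ then yields $\nabla^2 f_t(z) \ge \nabla^2 f_t(P_{tA_t})$, so the summand is bounded by $\tfrac{\eta_t}{2}\,\hat\ell_{tA_t}^2 / \nabla^2 f_t(P_{tA_t}) = \tfrac{\eta_t}{2}\, \ell_{tA_t}^2 / \bigl(P_{tA_t}^2 \nabla^2 f_t(P_{tA_t})\bigr)$. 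Summing over $t = 1,\dots,n$ gives the claim.

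The step I expect to require the most care is the domain and regularity bookkeeping underlying the first reduction: one must check that the unconstrained maximiser $q^\star$ genuinely lies in $\dom(F_t)$ (so that the Fenchel identity applies and $\xi$ is a legitimate interior point of the dual domain), that $P_{t+1} \in \dom(F_t)$, and that both $z$ and $P_{tA_t}$ fall inside the interval $(0,1)$ on which $\nabla^2 f_t$ is assumed monotone. All of this follows from the algorithm enforcing $P_{ti} > 0$ for every $t$ and $i$ and from the relevant $f_t$ being essentially smooth with derivative blowing up at $0$, but it should be stated carefully. The remaining pieces — the Taylor remainder and the inverse-function identity for the conjugate's second derivative — are routine.
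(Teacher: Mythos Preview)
Your argument is correct and reaches the same termwise bound as the paper, but via a slightly different route. The paper stays in the primal: it first disposes of the case $P_{t+1,A_t} > P_{tA_t}$ (then $\ip{P_t - P_{t+1}, \hat\ell_t}\le 0$ since $\hat\ell_t$ is supported on $A_t$, and $D_{F_t}\ge 0$, so the summand is nonpositive), and in the remaining case invokes the standard local bound $\ip{P_t - P_{t+1},\hat\ell_t} - D_{F_t}(P_{t+1},P_t) \le \tfrac12\norm{\hat\ell_t}^2_{(\nabla^2 F_t(z))^{-1}}$ with $z$ on the segment $[P_{t+1},P_t]$ (citing \citep[Theorem~26.5]{LS19bandit-book}), so that $z_{A_t}\le P_{tA_t}$ holds by the case hypothesis and the monotonicity of $\nabla^2 f_t$ concludes. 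Your route --- relax $P_{t+1}$ to the unconstrained maximiser, rewrite the maximum as the dual Bregman divergence $D_{F_t^*}(\nabla F_t(P_t)-\hat\ell_t,\nabla F_t(P_t))$, Taylor-expand in the dual, and pull back via the inverse-function identity --- avoids the case split because the intermediate dual point automatically maps to a primal $z\le P_{tA_t}$; in exchange you need $f_t$ essentially smooth so that $q^\star_{A_t}\in(0,1)$ and the Fenchel/inverse-function calculus goes through, as you yourself flag. The paper's version is thus marginally lighter on hypotheses (only positivity of the iterates and monotonicity of $\nabla^2 f_t$ are used), while yours is arguably cleaner once the Legendre machinery is in place; for every potential actually appearing in the paper the two arguments are equivalent.
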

\begin{proof}
        Let $t \in [n]$ and suppose that $P_{t+1,A_t} > P_{tA_t}$.
    Then using the fact that the loss estimators and Bregman divergence are non-negative,
    \begin{align*}
    &\ip{P_t - P_{t+1}, \hat \ell_t} - D_{F_t}(P_{t+1}, P_t) 
    \leq \ip{P_t - P_{t+1}, \hat \ell_t} \\
    &\qquad= (P_{tA_t} - P_{t+1,A_t}) \hat \ell_{tA_t} \leq 0\,.
    \end{align*}
    Now suppose that $P_{t+1,A_t} \leq P_{tA_t}$.
    By \citep[Theorem 26.5]{LS19bandit-book}, 
    \begin{align*}
        \ip{P_t - P_{t+1}, \hat \ell_t} - D_{F_t}(P_{t+1}, P_t) \leq \frac{1}{2}\|\hat \ell_t\|^2_{(\nabla^2F_t(z))\inv}\,,
    \end{align*}
    where $z = \alpha P_t + (1 - \alpha)P_{t+1}$ for some $\alpha \in [0,1]$. 
    By definition $\nabla^2 F_t(z) = \diag(\nabla^2 f_t(z)) / \eta_t$ and since $\hat \ell_{ti} = 0$ for $i \neq A_t$,
    \begin{align*}
    \frac{1}{2} \norm{\hat \ell_t}^2_{\nabla^2 F_t(z)^{-1}} 
    &= \frac{\eta_t \hat \ell_{tA_t}^2}{2\nabla^2 f_t(z_{A_t})} 
    \leq \frac{\eta_t \hat \ell_{tA_t}^2}{2 \nabla^2 f_t(P_{tA_t})}\,,
    \end{align*}
    where we used the fact that $z_{A_t} \leq P_{tA_t}$ and that $\nabla^2 f_t(p)$ is decreasing. The result follows by substituting the definition of $\hat \ell_{tA_t}$ and summing over $t \in [n]$.
\end{proof}

\begin{lemma}
    \label{lemma:sqrt_sum_bound}
    Let $(x_t)_{t=1}^n$ be a sequence with $x_t \in [0, B]$ for all $t$. Then
    \begin{align*}
        \sum_{t=1}^n \frac{x_t}{\sqrt{1 + \sum_{s=1}^{t-1} x_s}} \leq 4\sqrt{1 + \frac{1}{2}\sum_{t=1}^n x_t} + B\,.
    \end{align*}
\end{lemma}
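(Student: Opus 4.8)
The plan is to compare the discrete sum to an integral. The key observation is that for a nondecreasing sequence of partial sums, each term $x_t / \sqrt{1 + \sum_{s=1}^{t-1} x_s}$ behaves like the increment of $2\sqrt{1 + S_t}$ where $S_t = \sum_{s=1}^t x_s$. The standard trick is: for $0 \le a \le b$ one has $b - a \le \sqrt{b}(\sqrt{b} - \sqrt{a}) \cdot (\text{something})$; more precisely, since $\sqrt{\cdot}$ is concave, $\sqrt{b} - \sqrt{a} \ge (b-a)/(2\sqrt{b})$, hence $(b-a)/\sqrt{b} \le 2(\sqrt{b} - \sqrt{a})$. Applying this with $a = 1 + S_{t-1}$ and $b = 1 + S_t$ would give a telescoping bound $\sum_t x_t/\sqrt{1 + S_t} \le 2\sqrt{1 + S_n}$, but the lemma has $\sqrt{1 + S_{t-1}}$ in the denominator, not $\sqrt{1 + S_t}$, and the right-hand side has the extra factor $1/2$ inside the square root and the additive $B$. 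So the real work is bridging the gap between $\sqrt{1 + S_{t-1}}$ and $\sqrt{1 + S_t}$.

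Here is how I would carry it out. First, split off the contribution that makes the denominators match: write
\[
\frac{x_t}{\sqrt{1 + S_{t-1}}} = \frac{x_t}{\sqrt{1 + S_t}} + x_t\left(\frac{1}{\sqrt{1 + S_{t-1}}} - \frac{1}{\sqrt{1+S_t}}\right).
\]
For the first piece, apply the concavity inequality above to get $\sum_t x_t/\sqrt{1+S_t} \le 2(\sqrt{1+S_n} - 1) \le 2\sqrt{1+S_n}$; actually to land the factor $1/2$ inside the root I would instead use a sharper version, comparing to $2\sqrt{1 + S_n/2}$ by noting $x_t/\sqrt{1+S_t} \le x_t / \sqrt{1 + S_{t}/2} \cdot$ (constant), or more cleanly bound $\sum_t x_t/\sqrt{1+S_{t-1}}$ directly against $\int_0^{S_n} (1+u)^{-1/2}\,du$ type quantities after handling the first term. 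For the second piece, the difference of reciprocals telescopes partially, and since $x_t \le B$ and the partial sums are monotone, $\sum_t x_t\big(\tfrac{1}{\sqrt{1+S_{t-1}}} - \tfrac{1}{\sqrt{1+S_t}}\big) \le B \sum_t \big(\tfrac{1}{\sqrt{1+S_{t-1}}} - \tfrac{1}{\sqrt{1+S_t}}\big) = B\big(1 - \tfrac{1}{\sqrt{1+S_n}}\big) \le B$, using that $x_t$ is at most $B$ and that each reciprocal difference is nonnegative. This is exactly where the additive $+B$ comes from.

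The main obstacle is getting the constant $4$ and the $1/2$ inside the square root simultaneously, rather than the naive $2\sqrt{1+S_n}$. I expect to handle this by being careful in the integral comparison: when the increments $x_t$ can be as large as $B$ (not infinitesimal), the sum $\sum_t x_t/\sqrt{1+S_{t-1}}$ overshoots $\int (1+u)^{-1/2}du = 2(\sqrt{1+S_n}-1)$, and one convenient way to absorb the overshoot is to write $\sqrt{1+S_{t-1}} \ge \sqrt{(1 + S_t)/2}$ whenever $S_{t-1} \ge S_t/2 - 1/2$ — but this can fail for small $t$. The cleaner route, and the one I would ultimately adopt, is to prove by induction on $n$ that $\sum_{t=1}^n x_t/\sqrt{1+S_{t-1}} \le 4\sqrt{1 + S_n/2} + B - 4$ (or a similar tight form), where the inductive step reduces to the one-variable inequality $\frac{x}{\sqrt{1+S}} \le 4\sqrt{1 + (S+x)/2} - 4\sqrt{1 + S/2}$ for $0 \le x \le B$ and $S \ge$ (appropriate lower bound), which is verified by squaring / calculus. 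The edge case $n$ small, where $S_{t-1}$ can be $0$, is what forces the slack $+B$ and must be checked separately as the base of the induction.
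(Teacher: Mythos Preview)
Your decomposition
\[
\frac{x_t}{\sqrt{1+S_{t-1}}}=\frac{x_t}{\sqrt{1+S_t}}+x_t\Bigl(\frac{1}{\sqrt{1+S_{t-1}}}-\frac{1}{\sqrt{1+S_t}}\Bigr)
\]
already finishes the proof. The second piece telescopes to at most $B$ exactly as you wrote, and for the first piece the concavity inequality $\sqrt{b}-\sqrt{a}\ge (b-a)/(2\sqrt{b})$ gives $\sum_t x_t/\sqrt{1+S_t}\le 2(\sqrt{1+S_n}-1)$. Hence the whole sum is at most $2\sqrt{1+S_n}+B$. Your subsequent worry is misplaced: $2\sqrt{1+S_n}\le 4\sqrt{1+S_n/2}$ always, since squaring both sides gives $4+4S_n\le 16+8S_n$. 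So your bound is in fact \emph{stronger} than the stated lemma, and the induction you propose at the end is unnecessary.

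This is a genuinely different and cleaner argument than the paper's. The paper first disposes of the case $S_n<B$, then lets $t_\circ$ be the first time $S_{t-1}\ge B$; for $t\ge t_\circ$ one has $S_{t-1}\ge x_t$, so $1+S_{t-1}\ge 1+S_t/2$, which is where the factor $1/2$ inside the root arises. The tail $\sum_{t\ge t_\circ} x_t/\sqrt{1+S_t/2}$ is then bounded by the integral $\int_0^n f(t)/\sqrt{1+F(t)/2}\,dt\le 4\sqrt{1+S_n/2}$, and the head before $t_\circ$ is absorbed into the additive $B$. Your approach avoids the case split and the integral entirely, relying only on two telescoping sums; the price is that you do not see transparently why the $+B$ is needed (in your argument it emerges from the reciprocal-difference piece, in the paper's it comes from the early rounds before $S_{t-1}$ dominates $x_t$). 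Either way the mechanism is the same: the overshoot of $\sqrt{1+S_{t-1}}$ below $\sqrt{1+S_t}$ costs at most $B$ in total.
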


The proof follows from a comparison to an integral and is given in
\ifsup
\cref{app:tech_inequalities}.
\else
the supplementary material.
\fi

\begin{proof}[Proof of \cref{theorem:hessian-first-order}]
    Using the result of \cref{thm:ftrl}, \cref{lemma:dual-norm-bound} and the assumption on the difference in the potentials, we have
    \begin{align*}
        R_n \,&\leq h_2(n) + \EE{\frac{h_1}{\eta_{n+1}}+\sum_{t=1}^n \frac{\eta_t}{2}\frac{\ell_{tA_t}^2}{P_{tA_t}^2\nabla^2 f_t(P_{tA_t})}}\,.
    \end{align*}
    As $\ell_{tA_t} \leq 1$, we can apply \cref{lemma:sqrt_sum_bound} with
    \[
        x_t = \frac{\ell_{tA_t}^2}{P_{tA_t}^2\nabla^2 f_t(P_{tA_t})} \leq B \,.
    \]
    It follows that $\eta_t = \eta_0 / \sqrt{1 + \sum_{s=1}^{t-1} x_s}$, and thus
    \begin{align*}
        &\sum_{t=1}^n \frac{\eta_t}{2}\frac{\ell_{tA_t}^2}{P_{tA_t}^2\nabla^2 f_t(P_{tA_t})} \leq \\
        &\qquad 2\eta_0\sqrt{1 + \half \sum_{t=1}^n \frac{\ell_{tA_t}^2}{P_{tA_t}^2\nabla^2 f_t(P_{tA_t})}} + \frac{\eta_0}{2}B \,.
    \end{align*}
    The first term in the last line is proportional to $1 / \eta_{n+1}$, therefore using the definition of $\eta_t$, Jensen's inequality and $\ell_{tA_t}^2 \leq \ell_{tA_t}$, the regret can be bounded as
    \begin{align*}
        &R_n \leq h_2(n) + \frac{\eta_0}{2}B + \brackets{\frac{\sqrt{2}h_1}{\eta_0} + 2\eta_0}\\
        &\qquad\times \sqrt{1 + \half \E\left[ \sum_{t=1}^n \frac{\ell_{tA_t}}{P_{tA_t}^2\nabla^2 f_t(P_{tA_t})}\right]}\,.
    \end{align*}
    The first bound in the theorem follows from
    \begin{align*}
        &\E\left[\sum_{t=1}^n \frac{\ell_{tA_t}}{P_{tA_t}^2\nabla^2 f_t(P_{tA_t})}\right] \\
        &\qquad\leq B\E\left[\sum_{t=1}^n \brackets{\ell_{tA_t} - \ell_{t1} + \ell_{t1}}\right]
        = B R_n + B L_{n1}\,
    \end{align*}
    and then from choosing $\eta_0 = \sqrt{h_1} / 2^{1/4}$ and solving the resulting quadratic equation with respect to $R_n$.
    
    For the second bound, we use $\l_{tA_t} \leq 1$ and the definition of $C$, such that
    \begin{align*}
        &\E\left[\sum_{t=1}^n \frac{\ell_{tA_t}}{P_{tA_t}^2\nabla^2 f_t(P_{tA_t})}\right] \leq \frac{C n}{2}\,.
    \end{align*}
\end{proof}
To prove the corollaries, we need to bound $h_1,\ h_2(n),\ B,$ and $C$:

\begin{lemma}\label{lemma:hybrid_B}
    The Hessian of the hybrid potential in \cref{corollary:hybrid-first-order} is monotonically decreasing, and for $n\geq 3$
    \[
        \frac{1}{p^2 \nabla^2 f_t(p)} \leq \sqrt{k}\log^{1+q}n, \quad \E\left[\frac{1}{P_{tA_t}^2 \nabla^2 f_t(P_{tA_t})}\right] \leq 2\sqrt{k}\,,
    \]
\end{lemma}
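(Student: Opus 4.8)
The plan is a direct computation, exploiting the fact that the $p^{2}$ factor in the denominator exactly cancels the singularities of $\nabla^{2}f_t$ at $p=0$. Write $c_t = 1/(\sqrt{k}\,\log^{1+q}\max\{3,t\})$, so the hybrid potential is $f_t(p) = -2\sqrt{p} - c_t\log p$. Differentiating twice,
\begin{align*}
\nabla^{2}f_t(p) = \tfrac12 p^{-3/2} + c_t\, p^{-2}\,.
\end{align*}
Both summands are positive and strictly decreasing on $(0,1)$, which gives the monotonicity claim at once. The key identity is obtained by multiplying by $p^{2}$:
\begin{align*}
p^{2}\nabla^{2}f_t(p) = \tfrac12\sqrt{p} + c_t\,,
\end{align*}
and both bounds follow by throwing away one of the two non-negative terms on the right.

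For the first bound I would drop $\tfrac12\sqrt{p}\ge 0$, giving $1/(p^{2}\nabla^{2}f_t(p)) \le 1/c_t = \sqrt{k}\,\log^{1+q}\max\{3,t\}$ for every $p\in(0,1)$. Since $t\le n$ and $n\ge 3$ we have $\max\{3,t\}\le n$, and $x\mapsto\log^{1+q}x$ is increasing on $[3,\infty)$, so $\log^{1+q}\max\{3,t\}\le\log^{1+q}n$; this yields $B=\sqrt{k}\,\log^{1+q}n$. The only point to be careful about is that the assumption $n\ge3$ and this monotonicity are precisely what make the bound uniform over $t\in[n]$.

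For the second bound I would instead drop $c_t\ge 0$, using $p^{2}\nabla^{2}f_t(p)\ge\tfrac12\sqrt{p}$, hence $1/(P_{tA_t}^{2}\nabla^{2}f_t(P_{tA_t}))\le 2/\sqrt{P_{tA_t}}$. Conditioning on $\cF_{t-1}$, under which $P_t$ is $\cF_{t-1}$-measurable and $A_t\sim P_t$,
\begin{align*}
\EE{\frac{2}{\sqrt{P_{tA_t}}} \mid \cF_{t-1}} = 2\sum_{i=1}^{k} P_{ti}\cdot\frac{1}{\sqrt{P_{ti}}} = 2\sum_{i=1}^{k}\sqrt{P_{ti}} \le 2\sqrt{k}\,,
\end{align*}
where the last inequality is Cauchy--Schwarz (equivalently concavity of $\sqrt{\cdot}$) together with $\sum_i P_{ti}=1$. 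Taking expectations gives $\E[\,1/(P_{tA_t}^{2}\nabla^{2}f_t(P_{tA_t}))\,]\le 2\sqrt{k}$, i.e. $C=2\sqrt{k}$.

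There is no genuine obstacle here: once the Hessian is written down, the lemma is routine. The only mildly delicate steps are the uniform-in-$t$ control of $c_t^{-1}$ by $\sqrt{k}\log^{1+q}n$ (which needs $n\ge 3$) and the conditional-expectation plus Cauchy--Schwarz argument for the constant $C$.
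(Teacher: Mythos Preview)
Your proof is correct and follows essentially the same approach as the paper: compute $\nabla^{2}f_t(p)=\tfrac12 p^{-3/2}+c_t p^{-2}$, note it is decreasing, and bound $1/(p^{2}\nabla^{2}f_t(p))$ by discarding one of the two positive summands in $p^{2}\nabla^{2}f_t(p)=\tfrac12\sqrt{p}+c_t$. You spell out the conditional-expectation and Cauchy--Schwarz step for the $C$ bound slightly more explicitly than the paper (which just writes $\sup_{P_t\in\Delta^{k-1}}\E[2/\sqrt{P_{tA_t}}]=2\sqrt{k}$), but the argument is the same.
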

\begin{proof}
    For $p \in \interior(\Delta^{k-1})$,
    \begin{align*}
        \nabla^2 f_t(p) \,&= \frac{1}{2p^{3/2}} + \frac{1}{p^2\sqrt{k}\log^{1+q}\max\set{3, t}}\,
    \end{align*}
    is a decreasing function of $p$. It follows that for $n \geq 3$
    \[
        \frac{1}{p^2\nabla^2 f_t(p)} \leq \sqrt{k}\log^{1+q}n\,.
    \]
    Moreover, 
    \begin{align*}
        &\sup_{t, P_{t}\in \cA_t} \EE{\frac{1}{P_{tA_t}^2\nabla^2 f_t(P_{tA_t})}}\\ &\qquad\qquad\leq \sup_{t, P_{t}\in\Delta^{k-1}} \EE{\frac{2}{\sqrt{P_{tA_t}}}} = 2\sqrt{k}\,. \qedhere
    \end{align*}
\end{proof}

\begin{lemma}\label{lemma:v_n-bound}
    Under the conditions of \cref{corollary:hybrid-first-order},
    \begin{align*}
        &v_n \leq  \frac{\sqrt{k}}{\eta_{n+1}}\brackets{\frac{4}{3} + \frac{2}{q}} + \frac{5.5k}{\eta_0}\sqrt{1 + 9k^{3/2}\log^{1+q}(9k^{3/2})} \\
        &\qquad +\frac{3.7\sqrt{k}}{\eta_0}\sqrt{1 + 3\sqrt{k}\log^{1+q}3} + 2k\log n\,.
    \end{align*}
\end{lemma}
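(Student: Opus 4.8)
The plan is to bound the two ingredients of $v_n=\sum_{t=1}^n d_t(g_t+(t-1))$ from \cref{thm:ftrl} separately and then, for the interesting sum $\sum_t d_tg_t$, to split the range of $t$ into a head (small $t$), producing the $n$-independent terms with factor $1/\eta_0$, and a tail (large $t$), producing the $1/\eta_{n+1}$ term. For $d_t$: the sets $\cA_t\subseteq\cA_{t+1}$ differ only in their lower box constraint ($1/t$ versus $1/(t+1)$), so any $y\in\cA_{t+1}$ is pushed into $\cA_t$ by raising each coordinate below $1/t$ up to $1/t$ and removing the same total mass from coordinates comfortably above $1/t$; each raise is at most $1/t-1/(t+1)=1/(t(t+1))$ and at most $k$ coordinates are affected, giving $d_t\le 2k/(t(t+1))$, with the trivial $\ell_1$ bound $d_t\le 2$ (between probability vectors) used for the earliest rounds where $\cA_t$ is (nearly) a singleton.

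For $g_t$: since $F_t(p)=\eta_t^{-1}\sum_i f_t(p_i)$ we have $\nabla F_t(x)_i=\eta_t^{-1}f_t'(x_i)$, and as $f_t'$ is negative and increasing on $(0,1)$ while $\cA_t\subseteq[1/t,1]^k$, the $\infty$-norm is maximal at the lower endpoint, so $g_t=\eta_t^{-1}|f_t'(1/t)|=\eta_t^{-1}\bigl(\sqrt t+t/(\sqrt k\log^{1+q}\max\{3,t\})\bigr)$. The $(t-1)$ part of $v_n$ then contributes $\sum_t d_t(t-1)\le 2k\sum_{t=2}^n\tfrac{t-1}{t(t+1)}\le 2k\sum_{t=2}^n\tfrac1t\le 2k\log n$, the last term of the claim. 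Inserting $d_t\le 2k/(t(t+1))$ and splitting $g_t$ into its INF part $\eta_t^{-1}\sqrt t$ and its log-barrier part $\eta_t^{-1}t/(\sqrt k\log^{1+q}\max\{3,t\})$ writes $\sum_t d_tg_t$ as $\sum_t\tfrac{2k}{\eta_t\sqrt t(t+1)}+\sum_t\tfrac{2\sqrt k}{\eta_t(t+1)\log^{1+q}\max\{3,t\}}$.

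Both pieces are now split at a suitable cutoff. Using that $\eta_t$ is non-increasing, so $\eta_t^{-1}\le\eta_{n+1}^{-1}$: the tail of the log-barrier piece is $\le\tfrac{2\sqrt k}{\eta_{n+1}}\sum_{t\ge3}\tfrac1{(t+1)\log^{1+q}t}\le\tfrac{2\sqrt k}{\eta_{n+1}}\int_3^\infty\tfrac{dx}{x\log^{1+q}x}=\tfrac{2\sqrt k}{q\eta_{n+1}\log^q3}\le\tfrac{2\sqrt k}{q\eta_{n+1}}$, and the tail of the INF piece is $\le\tfrac{2k}{\eta_{n+1}}\sum_{t>9k}t^{-3/2}\le\tfrac{2k}{\eta_{n+1}}\cdot\tfrac{2}{3\sqrt k}=\tfrac{4\sqrt k}{3\eta_{n+1}}$, which together give the $\tfrac{\sqrt k}{\eta_{n+1}}(\tfrac43+\tfrac2q)$ term. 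For the heads use $\eta_t^{-1}\le\eta_0^{-1}\sqrt{1+\sum_{s<t}x_s}$ with $x_s=\hat\ell_{sA_s}^2/\nabla^2 f_s(P_{sA_s})$, for which $p^2\nabla^2 f_s(p)=\tfrac12\sqrt p+\tfrac1{\sqrt k\log^{1+q}\max\{3,s\}}$ gives the almost-sure bound $x_s\le\min\{2\sqrt s,\ \sqrt k\log^{1+q}\max\{3,s\}\}$; since then $\sum_{s\le9k}x_s\le 9k^{3/2}\log^{1+q}(9k^{3/2})$, the head of the INF piece is $\le\tfrac{2k}{\eta_0}\zeta(3/2)\sqrt{1+9k^{3/2}\log^{1+q}(9k^{3/2})}\le\tfrac{5.5k}{\eta_0}\sqrt{1+9k^{3/2}\log^{1+q}(9k^{3/2})}$, and treating the first few rounds (where $\max\{3,s\}=3$, so $\sum_{s\le3}x_s\le3\sqrt k\log^{1+q}3$) separately absorbs the head of the log-barrier piece into the remaining $\tfrac{3.7\sqrt k}{\eta_0}\sqrt{1+3\sqrt k\log^{1+q}3}$ term.

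The main obstacle is the INF piece: the factor $\sqrt t$ in $g_t$ is only barely cancelled by $d_t=O(k/t^2)$, leaving a borderline-summable $t^{-3/2}$, so to keep the head bounded \emph{uniformly in $n$} rather than growing like $\log^{c}n$ one must use the monotonicity of the self-bounding learning rate to pull $\eta_{n+1}^{-1}$ out of the tail, and choose the cutoff (of order $k$) so that the head's learning-rate range is only $O(k^{3/2}\log^{1+q})$ via the a.s.\ bound on $x_s$ while the tail's $t^{-3/2}$ sum, starting near $t\sim k$, still contributes only $O(\sqrt k)$. Checking the numerical constants and handling the earliest rounds, where the box constraint degenerates $\cA_t$, is then routine.
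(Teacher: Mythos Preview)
Your proposal is correct and follows essentially the same approach as the paper: the paper also bounds $d_t\le 2k/t^2$, takes $|f_t'(1/t)|=\sqrt t+t/(\sqrt k\log^{1+q}\max\{3,t\})$, handles $\sum_t d_t(t-1)\le 2k\log n$ separately, and then (in its auxiliary Lemma~A.2) splits the INF piece at $t_\circ=9k$ and the log-barrier piece at $t_\circ=3$, bounding the heads via $1/\eta_{t_\circ}\le\eta_0^{-1}\sqrt{1+t_\circ\sqrt k\log^{1+q}t_\circ}$ (your a.s.\ bound on $\sum_{s<t}x_s$ is exactly this) and the tails via $1/\eta_t\le 1/\eta_{n+1}$ together with the integrals $\int t^{-3/2}dt$ and $\int (t\log^{1+q}t)^{-1}dt$. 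The only cosmetic differences are your slightly sharper $d_t\le 2k/(t(t+1))$ and your explicit mention of the degenerate early rounds, neither of which changes the argument.
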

\begin{proof}
    Due to the chopped simplex and the factorised potential, we have (recall the definition in \cref{thm:ftrl} and 
    \ifsup
    use \cref{lemma:integral_bound} for the last inequality)
    \else
    bound the second sum with the integral, as shown in the supplementary material)
    \fi
    \begin{align*}
        &v_n = \sum_{t=1}^n d_t (g_t + (t-1)) \\
        &\qquad\leq \sum_{t=1}^n \frac{2k}{t^2} \brackets{\frac{1}{\eta_{t}} \sup_{p\in [1/t, 1]} |\nabla f_t(p)|  + (t-1)}\\
        &\qquad\leq \sum_{t=1}^n \frac{2k}{t^2} \brackets{\frac{1}{\eta_{t}} \sup_{p\in [1/t, 1]} |\nabla f_t(p)|} + 2k \log n \,.
    \end{align*}
    For $p \in [1/t, 1]$ the gradient is bounded as
    \begin{align*}
        &|\nabla f_t(p)| \leq \frac{1}{\sqrt{p}} + \frac{1}{p \sqrt{k}\log^{1+q} \max\set{3, t}}\\
        &\qquad \leq \sqrt{t} + \frac{t}{\sqrt{k}\log^{1+q}\max\set{3, t}} \,.
    \end{align*}
    Therefore, the corresponding sum in $v_n$ converges. By a straightforward calculation
    \ifsup
    (as shown in \cref{lemma:log-sqrt-sum}),
    \else
    (see the supplementary material),
    \fi
    the Hessian is bounded as in \cref{lemma:hybrid_B}),
    \begin{align*}
        &v_n \leq  \frac{\sqrt{k}}{\eta_{n+1}}\brackets{\frac{4}{3} + \frac{2}{q}} + \frac{5.5k}{\eta_0}\sqrt{1 + 9k^{3/2}\log^{1+q}(9k^{3/2})} \\
        &\qquad +\frac{3.7\sqrt{k}}{\eta_0}\sqrt{1 + 3\sqrt{k}\log^{1+q}3} + 2k\log n\,. \qedhere
    \end{align*}
\end{proof}

\begin{lemma}\label{lemma:hybrid-potential-diff}
    Under the conditions of \cref{corollary:hybrid-first-order},
    \begin{align*}
        &\min_{p\in\cA_{n+1}}\brackets{F_{n+1}(p) + \| p - e_1 \|_1 n} - F_1(P_1) \\
        &\qquad+ \sum_{t=1}^n\brackets{F_{t}(P_{t+1}) - F_{t+1}(P_{t+1})} \\
        &\qquad\leq \frac{\sqrt{k}}{\eta_{n+1}}\brackets{3 + \frac{1}{q}} + k + \frac{\sqrt{k}}{3\eta_0}\sqrt{1 + 3\sqrt{k}\log^{1+q} 3}\,.
   \end{align*}
\end{lemma}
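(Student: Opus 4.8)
The plan is to decompose the factorised potential into its stationary $1/2$-Tsallis component and its slowly shrinking log-barrier component, bound the three groups of terms (the approximation/penalty term, $-F_1(P_1)$, and the telescoping sum of potential differences) separately, and let the telescoping structure do most of the work. Concretely, write $f_t(p) = -2\sqrt{p} + c_t(-\log p)$ with $c_t = (\sqrt{k}\,\log^{1+q}\max\{3,t\})^{-1}$, so that $F_t(p) = \tfrac{1}{\eta_t}(-2\sum_i \sqrt{p_i}) + \tfrac{c_t}{\eta_t}(-\sum_i \log p_i)$. Two monotonicity facts are used repeatedly: $\eta_t$ is non-increasing in $t$ (the denominator in \eqref{eq:eta} grows), and $c_t$ is non-increasing in $t$.

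For $\min_{p\in\cA_{n+1}}(F_{n+1}(p)+n\norm{p-e_1}_1)$ I would evaluate at the explicit near-vertex $p^\star = (1-\tfrac{k-1}{n+1})e_1 + \tfrac{1}{n+1}\sum_{i\ge 2} e_i \in \cA_{n+1}$ (well defined once $n+1\ge k$, which is forced anyway for $\cA_{n+1}\neq\emptyset$). Then $n\norm{p^\star-e_1}_1 = \tfrac{2(k-1)n}{n+1} \le 2k$; the Tsallis part $-\tfrac{2}{\eta_{n+1}}\sum_i\sqrt{p^\star_i}$ is non-positive (and once $n+1\ge 2(k-1)$ at most $-\tfrac{\sqrt2}{\eta_{n+1}}$, which offsets part of the log-barrier contribution); and the log-barrier part equals $\tfrac{c_{n+1}}{\eta_{n+1}}(-\log p^\star_1 + (k-1)\log(n+1))$, where $-\log p^\star_1$ is lower order and $\tfrac{(k-1)\log(n+1)}{\sqrt k\,\log^{1+q}(n+1)}\le\sqrt k$ since $\log(n+1)>1$. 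Separately, $-F_1(P_1)=\tfrac{1}{\eta_0}(2\sum_i\sqrt{P_{1i}}+c_1\sum_i\log P_{1i})\le\tfrac{2\sqrt k}{\eta_0}$ by Cauchy--Schwarz ($\sum_i\sqrt{P_{1i}}\le\sqrt k$) and $\log P_{1i}\le0$ (with the convention on $\cA_t$ for $t<k$, where $[1/t,1]^k\cap\Delta^{k-1}$ degenerates, handled here).

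For the telescoping sum I would split each increment as
\[
F_t(P_{t+1})-F_{t+1}(P_{t+1}) = \Bigl(\tfrac{1}{\eta_t}-\tfrac{1}{\eta_{t+1}}\Bigr)\Bigl(-2\sum_i\sqrt{P_{t+1,i}}\Bigr)+\Bigl(\tfrac{c_t}{\eta_t}-\tfrac{c_{t+1}}{\eta_{t+1}}\Bigr)\Bigl(-\sum_i\log P_{t+1,i}\Bigr).
\]
In the first summand $\tfrac{1}{\eta_t}-\tfrac{1}{\eta_{t+1}}\le0$ multiplies a quantity in $[-2\sqrt k,0]$, hence it is at most $2\sqrt k(\tfrac{1}{\eta_{t+1}}-\tfrac{1}{\eta_t})$; summing telescopes to $2\sqrt k(\tfrac{1}{\eta_{n+1}}-\tfrac{1}{\eta_0})$, whose negative part $-\tfrac{2\sqrt k}{\eta_0}$ exactly cancels the $-F_1(P_1)$ bound. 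In the second summand, $\tfrac{c_t}{\eta_t}-\tfrac{c_{t+1}}{\eta_{t+1}}=c_t(\tfrac{1}{\eta_t}-\tfrac{1}{\eta_{t+1}})+(c_t-c_{t+1})\tfrac{1}{\eta_{t+1}}\le(c_t-c_{t+1})\tfrac{1}{\eta_{t+1}}$ by the two monotonicity facts and $c_t-c_{t+1}\ge0$, and $-\sum_i\log P_{t+1,i}\le k\log(t+1)$ because $P_{t+1}\in\cA_{t+1}$; note $kc_t=\sqrt k/\log^{1+q}\max\{3,t\}$ and $c_t-c_{t+1}=0$ for $t\le 2$. For the first few rounds after $t=2$, where $c_t-c_{t+1}$ is largest, I would bound $1/\eta_{t+1}$ through \eqref{eq:eta} using $\hat\ell_{sA_s}^2/\nabla^2 f_s(P_{sA_s})\le 1/(P_{sA_s}^2\nabla^2 f_s(P_{sA_s}))\le\sqrt k\log^{1+q}3$ from \cref{lemma:hybrid_B}, which yields $1/\eta_{t+1}\le\tfrac{1}{\eta_0}\sqrt{1+t\sqrt k\log^{1+q}3}$ — this is the source of the $\tfrac{\sqrt k}{3\eta_0}\sqrt{1+3\sqrt k\log^{1+q}3}$ term. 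For the remaining rounds I would use the crude $1/\eta_{t+1}\le1/\eta_{n+1}$, summation by parts against $\log(t+1)$, and the integral comparison $\sum_{t\ge4}\tfrac{1}{t\log^{1+q}t}\le\int_3^\infty\tfrac{dx}{x\log^{1+q}x}=\tfrac{1}{q\log^q3}\le\tfrac1q$, which produces the $1/q$ in the coefficient of $\sqrt k/\eta_{n+1}$. Adding the $2\sqrt k/\eta_{n+1}$ from the Tsallis telescoping, the contribution of the approximation term, and this log-barrier contribution gives the stated form.

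The main obstacle will be the constant bookkeeping rather than any conceptual step: squeezing the coefficient of $\sqrt k/\eta_{n+1}$ down to exactly $3+1/q$ (and the additive part down to $k$) requires genuinely using the negativity of the Tsallis parts of $F_{n+1}(p^\star)$ and of $F_1(P_1)$ to cancel positive log-barrier mass, running the summation-by-parts/integral comparison tightly (trading $\log(t+1)$ against $\log^{1+q}t$), and choosing the threshold between the "$1/\eta_t\le\tfrac1{\eta_0}\sqrt{1+(t-1)\sqrt k\log^{1+q}t}$" estimate and the "$1/\eta_t\le1/\eta_{n+1}$" estimate so that the small-$t$ remainder collapses to the single $\tfrac{\sqrt k}{3\eta_0}\sqrt{1+3\sqrt k\log^{1+q}3}$ term — all while correctly treating the early rounds $t<k$ on which $\cA_t$ is (nearly) a single point.
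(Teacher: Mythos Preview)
Your proposal follows essentially the same route as the paper: split $F_t$ into its Tsallis and log-barrier parts, telescope the Tsallis contribution together with $-F_1(P_1)$ to $2\sqrt{k}/\eta_{n+1}$, bound each log-barrier increment using $-\sum_i\log P_{t+1,i}\le k\log(t+1)$ from $P_{t+1}\in\cA_{t+1}$, and isolate the $t=3$ boundary term to produce the $\tfrac{\sqrt{k}}{3\eta_0}\sqrt{1+3\sqrt{k}\log^{1+q}3}$ residual via the Hessian bound on $1/\eta_3$. The paper's bookkeeping is marginally simpler than what you sketch --- it keeps $1/\eta_t$ rather than $1/\eta_{t+1}$ in the log-barrier increment, applies $\log(t+1)-\log t\le 1/t$ directly to reach $\sqrt{k}/(\eta_t\, t\log^{1+q}t)$ (no summation by parts or threshold splitting needed), and peels off only the single term $t=3$ --- but the argument is the same.
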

\begin{proof}
    The potential is a mixture of the INF and the log-barrier parts, $F_t(p) = -\frac{2}{\eta_t}\sum_i \sqrt{p_i} - \frac{\alpha_t}{\eta_t} \sum_i \log p_i$ with $\alpha_t = 1 / (\sqrt{k}\log^{1+q} \max\set{3, t})$.
    
    To control the contribution of the INF term, first notice that the INF part of $F_{n+1}(p)$ is negative. Moreover,
    \begin{align*}
        &\brackets{-\frac{2}{\eta_t} + \frac{2}{\eta_{t+1}}}\sum_{i=1}^k \sqrt{P_{t+1,i}} \leq 2\sqrt{k}\brackets{\frac{1}{\eta_{t+1}} - \frac{1}{\eta_{t}}} \,.
    \end{align*}
    Summing with the INF part of $-F_1(P_1)$ and telescoping shows that it contributes at most $2\sqrt{k} / \eta_{n+1}$ to the sum.
    
    For log-barrier, suppose $\alpha_t / \eta_t  \leq \alpha_{t+1}/\eta_{t+1}$. Then
    \begin{align*}
    \left(-\frac{\alpha_t}{\eta_t} + \frac{\alpha_{t+1}}{\eta_{t+1}}\right) \sum_{i=1}^k \log(P_{t+1, i}) 
    \leq 0\,.
    \end{align*}
    Now suppose that $\alpha_t / \eta_t > \alpha_{t+1}/\eta_{t+1}$. For $t\geq 3$, as $P_{t+1} \in \cA_{t+1}$,
    \begin{align*}
        &\brackets{- \frac{\alpha_t}{\eta_t} + \frac{\alpha_{t+1}}{\eta_{t + 1}}} \sum_{i=1}^k \log(P_{t+1,i}) \\
        &\qquad\qquad\leq \brackets{\frac{\alpha_t}{\eta_t} - \frac{\alpha_{t+1}}{\eta_{t + 1}}} k\log (t+1) \\
        &\qquad\qquad\leq \frac{1}{\eta_t}\brackets{\frac{\log(t+1)}{\log^{1+q}(t)} - \frac{1}{\log^q(t)}} \sqrt{k} \\
        &\qquad\qquad\leq \frac{\sqrt{k}}{\eta_t t \log^{1+q} t} \,.
    \end{align*}
    Summing over $t$ and noting that due to $\alpha_1 = \alpha_2=\alpha_3$ the potential is unchanged,
    \begin{align*}
        &\sum_{t=1}^n \brackets{-\frac{\alpha_t}{\eta_t} + \frac{\alpha_{t+1}}{\eta_{t + 1}}} \sum_{i=1}^k \log w_{t+1, i} \\
         &\qquad\leq \sum_{t=3}^n \frac{\sqrt{k}}{\eta_t t  \log^{1+q} t} \leq \frac{\sqrt{k}}{\eta_{n+1}q} + \frac{\sqrt{k}}{3\eta_3}\,,
    \end{align*}
    where the last inequality 
    \ifsup
    follows from \cref{lemma:integral_bound}, which
    \else
    (shown in the supplementary material) 
    \fi
    essentially compares the sum to the integral of $1 / (t\log^{1+q}t)$ and uses that $1 / \log^q t \leq 1$ for $t \geq 3$. We can further bound $\eta_3$ as
    \[
    \frac{1}{\eta_3} \leq \frac{1}{\eta_0}\sqrt{1 + 3\sqrt{k}\log^{1+q} 3}
    \]
    by using the fact that the Hessian is bounded (see the proof of \cref{lemma:hybrid_B}).
    
    Finally, the log-barrier part of $-F_1(P_1)$ is negative. The log-barrier part of $F_{n+1}(p)$ is bounded by $\sqrt{k} / \eta_{n+1}$ as $ p \in \cA_{n+1}$. Thus,
    \begin{align*}
        &\min_{p\in\cA_{n+1}}F_{n+1}(p) + n \| p - e_1 \|_1 \\ 
        &\qquad\leq \frac{\sqrt{k}}{\eta_{n+1}} + \min_{p\in\cA_{n+1}} n \| p - e_1 \|_1 = \frac{\sqrt{k}}{\eta_{n+1}} + \frac{kn}{n+1}\,.
    \end{align*}
    
    Combining the three bounds and using that $kn / (n+1) \leq k$ concludes the proof.
\end{proof}
\begin{proof}[Proof of \cref{corollary:hybrid-first-order}] 
From \cref{lemma:hybrid_B}, \cref{lemma:v_n-bound} and \cref{lemma:hybrid-potential-diff}, we find
\begin{align*}
    &B \,= \sqrt{k}\log^{1+q}n\,, \,\,
    C \,= 2\sqrt{k}\,, \,\,
    h_1 = \sqrt{k}\brackets{\frac{13}{3} + \frac{3}{q}}\,,\\
    &h_2(n) = 2k \log n +  \frac{5.5.k}{\eta_0}\sqrt{1 + 9k^{3/2}\log^{1+q}(9k^{3/2})}\\
    &\quad + \frac{4.1}{\eta_0}\sqrt{k}\sqrt{1 + 3\sqrt{k}\log^{1+q}3} + k\,. \qedhere
\end{align*}

Now applying \cref{theorem:hessian-first-order} with $\eta_0 = k^{1/4}\sqrt{\frac{13}{3\sqrt{2}} + \frac{3}{\sqrt{2}q}}$ completes the proof. Note that in the big-O notation, we only kept the leading terms that grow with $n$. 
\end{proof}

\begin{proof}[Proof of \cref{corollary:hybrid-first-order-q}] Starting from the end of the previous proof, choosing $q=1$ and upper-bounding the numerical coefficients, we obtain the corollary.
\end{proof}

\section{VARIANCE OF THE REGRET}

The expected regret is just one measure of the performance of an algorithm.
Algorithms with small expected regret may suffer from a large variance. Since the adversarial model is often motivated on the grounds of providing robustness, 
it would be unfortunate if proposed algorithms suffered from high variance. 
Recently, however, it was shown that the variance of Exp3 without exploration is quadratic in the horizon \citep[\S11]{LS19bandit-book}, and a similar result holds for 
Thompson sampling in a Bayesian setting \citep{BS19}. 
Here we generalise these arguments to prove quadratic variance of the regret for a class of algorithms based on FTRL with importance-weighted loss estimators.
This is the worst possible result for bandits with bounded losses. The class of policies covered by our theorem includes INF and Exp3, but not FTRL 
with the log barrier. To keep things simple we restrict ourselves to algorithms of the form
\begin{align*}
P_t = \argmin_{p \in \Delta^{k-1}}  \ip{p, \hat L_{t-1}} + \frac{1}{\eta_n}\sum_{i=1}^k f(p_i)\,,
\end{align*}
where $f$ is convex and $(\eta_n)_{n=1}^\infty$ is a sequence of learning rates.
Note that this corresponds to a sequence of algorithms, each with a fixed learning rate. 

\begin{assumption}
The number of actions is $k = 2$ and $f$ is Legendre with $(0,1) \subseteq \dom(f)$ and $0 \in \partial \dom(f)$.
\end{assumption}

The assumption on the potential is satisfied by all standard potentials for bandits on the probability simplex, including those in \cref{tab:pot}. It allows us to write $P_t$ in a simple form.
Let $g(p) = f(p) + f(1 - p)$, which is convex and Legendre with $\dom(g) = (0,1)$.
Given $x \geq 0$,
\begin{align*}
\argmin_{p \in [0,1]} \left(p x + g(p) \right) = \nabla g^*(-x)\,,
\end{align*}
where we used the fact that for Legendre functions the gradient is invertible and $(\nabla g)^{-1} = \nabla g^*$.
That $g$ is Legendre with $\dom(g) = (0,1)$ also ensures that $\nabla g^*$ is nondecreasing and $\lim_{x\to-\infty} \nabla g^*(x) = 0$ and $\lim_{x\to\infty} \nabla g^*(x) = 1$.
By symmetry, we also have $\nabla g^*(0) = 1/2$. The point is that by the definition of FTRL,
$P_{t1} = \nabla g^*(\eta_n (\hat L_{t-1,2} - \hat L_{t-1,1}))$.

\begin{theorem}\label{thm:var}
Assume $\limsup_{n\to\infty} n \nabla g^*(-a n \eta_n) < \infty$ for all $a > 0$.
Then for all sufficiently large $n$ there exists a bandit for which
$\mathbb{P}(\hat R_n \geq n/4) \geq c$,
where $c > 0$ is a constant that depends on the algorithm, but not the horizon.
\end{theorem}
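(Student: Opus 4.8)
The plan is to construct, for each large $n$, a length-$n$ loss sequence on which this fixed-learning-rate FTRL incurs linear regret with constant probability, exploiting that it has no mechanism to recover once the importance-weighted estimators have (by chance) made it ``commit'' to the wrong arm. I would split $[n]$ into two blocks. In the first block arm~$1$ has loss $1$ and arm~$2$ loss $0$ every round; in the second block the roles swap (arm~$1$ loss $0$, arm~$2$ loss $1$), with the block lengths chosen so that arm~$1$ is the unique best fixed arm overall. The target event is: by the end of the first block, $\hat L_{t,1}-\hat L_{t,2}$ has been driven above a threshold of order $n$, after which it stays above it throughout the second block, so that the learner plays the (now suboptimal) arm~$2$ in essentially every round of the second block.

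Concretely I would proceed as follows. (i) Record that, since $P_{t1}=\nabla g^*(\eta_n(\hat L_{t-1,2}-\hat L_{t-1,1}))$ with $\nabla g^*$ nondecreasing, whenever $\hat L_{t-1,1}-\hat L_{t-1,2}\ge an$ the hypothesis gives $P_{t1}\le M_a/n$, so over the remaining $\le n$ rounds arm~$1$ is played only $O(1)$ times in expectation; this is the only place the full strength of $\limsup_n n\nabla g^*(-an\eta_n)<\infty$ is used, replacing the Exp3-specific fact that $\nabla g^*$ decays exponentially. (ii) Analyse $\hat L_{t,1}$ in the first block: it is nonnegative and nondecreasing, $\hat L_{t,1}-t$ is a martingale (unbiasedness), and its increments are $\Theta(1)$ while $\eta_n\hat L_{t,1}=O(1)$ and large-but-rare thereafter; show it crosses a threshold of order $n$ before the block ends with probability at least some constant $c_0>0$, and use a stopping-time argument to upgrade this to $\hat L_{t,1}-\hat L_{t,2}\ge an$ at the end of the first block (note $\hat L_{t,2}=0$ there). (iii) In the second block $\hat L_{t,1}$ is frozen while $\hat L_{t,2}$ grows by $\approx 1$ per round; via a supermartingale/stopping-time argument that breaks the circularity ``arm~$1$ disfavoured $\Rightarrow$ good estimator concentration $\Rightarrow$ arm~$1$ stays disfavoured'' show that on this event the gap stays positive (indeed $\gtrsim\log(n)/\eta_n$, which by the hypothesis makes $P_{t1}$ summably small) for the whole block, so arm~$2$ is played every round of the second block except $O(1)$ of them. (iv) Tally: the learner's cumulative loss is then at least the length of the second block, while the best fixed arm has loss equal to the length of the first block, so choosing the split with (second block length) $-$ (first block length) $\ge n/4$ yields $\hat R_n\ge n/4$ on the constant-probability event.

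The main obstacle is step (ii): the estimator $\hat L_{t,1}$ is genuinely heavy-tailed---its per-step conditional variance can be exponentially large in $\eta_n\hat L_{t,1}$---so naïve second-moment or Paley--Zygmund arguments fail. One must instead track the process through its self-reinforcing jumps, separating the regime where jumps are frequent (and the process concentrates near its mean) from the regime where a single further jump already overshoots the threshold, and argue the latter occurs with probability bounded below uniformly in $n$; if the two-block construction is too rigid here it may be necessary to interpose several blocks of increasing length, but the structure of the argument (and steps i, iii, iv, which are stand
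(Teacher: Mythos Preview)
Your two-block skeleton and steps (i), (iii), (iv) match the paper, but the paper sidesteps the heavy-tail obstacle in step (ii) by two moves you are missing. First, since arm~$2$ has loss $0$ throughout the first block, $\hat L_{t,2}=0$ there and $P_{t1}$ depends on $\hat L_{t-1,1}$ alone; consequently $\hat L_{t,1}$ is a \emph{deterministic} function of $T_1(t)$, namely $\hat L_{t,1}=Q_{T_1(t)}(\alpha)$ with $Q_s(\alpha)=\sum_{u=0}^{s-1}\alpha/p_u(\alpha)$ and $p_u(\alpha)=\nabla g^*(-\eta_n Q_u(\alpha))$. The event ``$\hat L_{\cdot,1}$ crosses the threshold $\lambda_n$ by the end of the first block'' is therefore exactly $\{T_1(n/2)\ge s\}$ for the deterministic index $s=\min\{u:Q_u(\alpha)\ge\lambda_n\}$, and the problem reduces to controlling a sum of independent geometric waiting times $G_u$ with success probabilities $p_u(\alpha)$. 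No martingale, Paley--Zygmund, or multi-block device is needed; you are making the problem harder than it is by indexing in $t$ instead of in $T_1(t)$.

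Second---and this is where your construction actually breaks---the paper does \emph{not} set arm~$1$'s first-block loss to $1$; it sets it to a carefully tuned $\alpha_n\in(\alpha_\circ,1/2]$, chosen by the intermediate value theorem so that $Q_s(\alpha_n)=\lambda_n$ exactly. This one-line trick yields the lower bound $p_{s-1}(\alpha_n)\ge \alpha_n/Q_s(\alpha_n)\ge \alpha_\circ/(2n)$, which is precisely what makes the last $\kappa=O(1)$ geometric waiting times fit into $n/4$ rounds with probability $\binom{n/4}{\kappa}(\alpha_\circ/(2n))^\kappa (1-c_2/n)^{n/4-\kappa}=\Theta(1)$; the earlier $s-\kappa$ waiting times have total mean at most $n/8$ by construction and are handled by Markov. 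With your choice $\alpha=1$, the only available lower bound on $p_{s-1}$ is $\nabla g^*(-\eta_n Q_{s-1}(1))\ge \nabla g^*(-2n\eta_n)$, and the theorem's hypothesis bounds $n\,\nabla g^*(-an\eta_n)$ only from \emph{above}. For the negentropy potential (Exp3) this lower bound is $e^{-\Theta(\sqrt{n})}$, so the crossing probability you need in step (ii) collapses to $0$ rather than a constant. In short, the $\alpha_n$-tuning is not a cosmetic choice but the device that replaces your missing lower bound on $\nabla g^*$; without it the argument cannot be completed at the stated level of generality.
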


\begin{corollary}
Under the same conditions as \cref{thm:var} the variance of the regret is $\Var[\hat R_n] = \Omega(n^2)$.
\end{corollary}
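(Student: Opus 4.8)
The plan is to turn the single anti-concentration event of \cref{thm:var} into a two-sided spread. Fix $n$ large enough that \cref{thm:var} applies and let $B$ be the bandit it produces, so that $\PP{\hat R_n \geq n/4} \geq c$, and write $\mu = R_n = \EE{\hat R_n}$. Recall that under the standing assumption arm $1$ is optimal, so $\hat R_n = \sum_{t=1}^n \ell_{tA_t} - L_{n1} \le T_2(n)$, the number of pulls of the suboptimal arm; in particular $\hat R_n \in [-n,n]$ almost surely and $\hat R_n$ is small whenever the algorithm stays near arm $1$. The idea is: (i) keep the event $\{\hat R_n \ge n/4\}$ from \cref{thm:var}; (ii) exhibit a complementary ``good'' event $\{\hat R_n \le n/8\}$ of probability bounded away from $0$; and (iii) observe that, wherever $\mu$ lies, one of these two events witnesses a deviation of order $n$ with constant probability.

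For step (ii) I would reopen the proof of \cref{thm:var}. The mechanism there is that, for the Legendre potentials allowed by the assumption, $P_{t1} = \nabla g^*(\eta_n(\hat L_{t-1,2}-\hat L_{t-1,1}))$, so the iterate is driven by a one-dimensional importance-weighted random walk that commits to one of the two arms; on $B$ the choice of arm is essentially a fair coin, and the bad branch (commit to the suboptimal arm) is exactly what produces $\hat R_n \ge n/4$. The good branch (the walk is pulled towards arm $1$) is its symmetric counterpart and, by the same computation, has probability at least some constant $c'>0$ independent of $n$; on it $T_2(n) \le n/8$, hence $\hat R_n \le n/8$. (For the concrete algorithms of interest — INF and Exp3 with a decaying learning rate — one could instead simply quote $R_n = O(\sqrt n) = o(n)$, so that $\mu \le n/8$ for all large $n$ and step (ii) becomes superfluous; I keep step (ii) in order to also cover, e.g., a constant learning rate, which the theorem's hypotheses permit.)

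Given (i) and (ii), step (iii) is a one-line case split. If $\mu \le 3n/16$, then on $\{\hat R_n \ge n/4\}$ we have $\hat R_n - \mu \ge n/16$, so
\begin{align*}
\VV{\hat R_n} \ge \EE{(\hat R_n-\mu)^2\,\one{\hat R_n \ge n/4}} \ge \brackets{\tfrac{n}{16}}^2 \PP{\hat R_n \ge n/4} \ge \frac{c\, n^2}{256}\,.
\end{align*}
If instead $\mu \ge 3n/16$, then on $\{\hat R_n \le n/8\}$ we have $\mu - \hat R_n \ge n/16$, so
\begin{align*}
\VV{\hat R_n} \ge \EE{(\hat R_n-\mu)^2\,\one{\hat R_n \le n/8}} \ge \brackets{\tfrac{n}{16}}^2 \PP{\hat R_n \le n/8} \ge \frac{c'\, n^2}{256}\,.
\end{align*}
Hence $\VV{\hat R_n} \ge \min\{c,c'\}\,n^2/256$ on the bandit $B$ for every sufficiently large $n$, i.e.\ $\VV{\hat R_n} = \Omega(n^2)$. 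The thresholds $n/4$ and $n/8$ here play no special role: the same argument goes through with any pair of separated levels, so it is robust to the exact constants that come out of the two events.

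The only non-routine ingredient is step (ii): confirming that the construction underlying \cref{thm:var} really yields a constant-probability event on which the regret stays below $n/8$ — equivalently, that the law of $\hat R_n$ is genuinely bimodal rather than merely shifted upward. I expect this to cost essentially nothing beyond a careful reading of that proof, since the ``good'' branch is the symmetric complement of the branch already analysed there; the remaining work is only to translate ``the walk commits to the optimal arm'' into the quantitative bound $T_2(n) \le n/8$.
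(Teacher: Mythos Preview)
The paper states this corollary without proof; the intended one-line argument is presumably to combine $\Prob{\hat R_n\ge n/4}\ge c$ with $R_n=O(\sqrt n)=o(n)$ for the examples actually under discussion (Exp3 and INF with $\eta_n=\Theta(n^{-1/2})$), giving $\Var[\hat R_n]\ge c\,(n/4-R_n)^2=\Omega(n^2)$. You identify exactly this route in your parenthetical and correctly note that it does not cover every $(\eta_n)$ permitted by the hypothesis of \cref{thm:var}.

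Your case split (iii) is fine; the gap is step (ii). The bandit built in the proof of \cref{thm:var} is not symmetric: for $t\le n/2$ arm $1$ has loss $\alpha_n>0$ while arm $2$ has loss $0$, so $\hat L_{t2}\equiv 0$ there and the walk $\hat L_{t-1,2}-\hat L_{t-1,1}$ is monotone nonincreasing. The only randomness in the first half is \emph{how far} the iterate drifts toward arm $2$, not in which direction; there is no ``symmetric good branch'' committing to arm $1$. Concretely, $P_{t2}\ge 1/2$ for every $t\le n/2$ (since $\hat L_{t-1,1}\ge 0=\hat L_{t-1,2}$ and $\nabla g^*(0)=1/2$), so $T_2(n/2)$ stochastically dominates a $\mathrm{Bin}(n/2,1/2)$ random variable and hence $T_2(n)\ge T_2(n/2)>n/8$ with probability $1-e^{-\Omega(n)}$. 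Your claimed good-branch conclusion $T_2(n)\le n/8$ therefore cannot hold with constant probability. What would actually give $\hat R_n\le n/8$ on that bandit is $T_2(n)-T_2(n/2)\le n/8+\alpha_n T_2(n/2)$, i.e.\ a recovery argument for the second half conditional on $\hat L_{n/2,1}$ not being too large; that is a new computation, not a re-reading of the existing proof.

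In short: for the named examples your own parenthetical already completes the argument; for the corollary at the full generality of the hypothesis of \cref{thm:var}, step (ii) is a genuine gap that still needs its own proof.
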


\paragraph{Examples}
Suppose $\eta_n = a n^{-1/2}$ for some $a > 0$.
Then the conditions of the theorem are satisfied when $f$ is the negentropy. In this case $\nabla g^*$ is the sigmoid function and the corresponding algorithm is just Exp3.
When $f(p) = -2 \sqrt{p}$ and $x \leq 0$, then
\begin{align*}
\nabla g^*(x) = \frac{1}{2}\left(1 - \sqrt{1 + \frac{4\left(2 \sqrt{1 + x^2} -2 - x^2\right)}{x^4}}\right)\,,
\end{align*}
which satisfies $\limsup_{n\to\infty} \nabla g^*(-a\sqrt{n})n = 1/a^2$. In this sense 1/2-Tsallis entropy with $\eta_n = \Theta(n^{-1/2})$ 
just barely satisfies the conditions. The consequence is that the minimax optimal INF policy proposed by \cite{AB09} has quadratic variance.
The log barrier does not satisfy the conditions and we speculate it is more stable.

\begin{proof}[Proof of \cref{thm:var}]
Assume for simplicity that $4$ is a factor of $n$.
Let $\alpha_n \in [0,1/2]$ be a constant to be tuned subsequently and
consider a bandit defined by
\begin{align*}
\ell_{t1} &= \begin{cases}
\alpha_n & \text{if } t \leq n/2 \\
0 & \text{otherwise}\,.
\end{cases}
&
\ell_{t2} &= \begin{cases}
0 & \text{if } t \leq n/2 \\
1 & \text{otherwise}\,.
\end{cases}
\end{align*}
Clearly the first arm is optimal.  
Let $c_1 > 0$ be a constant such that for all sufficiently large $n$ it holds that
$\nabla g^*(-n \eta_n) \leq c_1 / n$,
which is guaranteed to exist by the assumptions in the theorem.
Then define events
$F_t = \cap_{s=n/2+1}^t \{A_s = 2,\, P_{s1} \leq c_1/n\}$.
On the event $F_n$ the random regret satisfies
\begin{align}
\hat R_n \geq \frac{n}{2} - \frac{\alpha_n n}{2} \geq \frac{n}{4}\,.
\label{eq:F}
\end{align}
The theorem follows by proving that $\Prob{F_n} \geq c$ for all sufficiently large $n$ and constant $c > 0$.
The idea is to show that the estimated loss for the optimal arm after the first $n/2$ rounds is large enough that
the algorithm never plays the optimal arm in the second half of the game with constant probability.

\paragraph{First half dynamics}
The choice of $\alpha_n$ determines the dynamics of the interaction between the algorithm and environment in the first $n/2$ rounds.
Before the main proof we establish some facts about this. Let $\alpha \in [0,1/2]$ and
define $(p_s(\alpha))_{s=0}^n$ inductively by $p_0(\alpha) = 1/2$ and
\begin{align*}
p_{s+1}(\alpha) = \nabla g^*\left(-\eta_n \sum_{u=0}^s \frac{\alpha}{p_u(\alpha)}\right)\,,
\end{align*}
which is chosen so that $P_{t+1,1} = p_s(\alpha)$ whenever $t+1 \leq n/2$ and $T_1(t) = s$.
Here we used the fact that $\hat L_{t2} = 0$ for $t \leq n/2$, which follows from the definition of the bandit.
Let $Q_s(\alpha) = \sum_{u=0}^{s-1} \alpha / p_u(\alpha)$.
Clearly $Q_2(1/2) > 0$ and $Q_s(0) = 0$ for all $s$. 
Furthermore, $Q_s(\alpha)$ is increasing in both $\alpha$ and $s$ and continuous in $\alpha$.
Therefore there exists an $\alpha_{\circ} \in (0,1/2)$ such that $Q_2(1/2) \geq Q_3(\alpha_\circ)$.
Now suppose that $Q_s(1/2) \geq Q_{s+1}(\alpha_{\circ})$. Using the fact that $\nabla g^*$ is increasing,
\begin{align*}
&Q_{s+1}(1/2) 
= Q_s(1/2) + \frac{1}{2} \nabla g^*\left(-\eta_n Q_s(1/2)\right)^{-1} \\
&\geq Q_{s+1}(\alpha_{\circ}) + \alpha_{\circ} \nabla g^*\left(-\eta_n Q_{s+1}(\alpha_{\circ})\right)^{-1} 
= Q_{s+2}(\alpha_{\circ})\,,
\end{align*}
which by induction means that $Q_s(1/2) \geq Q_{s+1}(\alpha_{\circ})$ for all $s \geq 2$.
Notice that $\hat L_{t1} = Q_s(\alpha_n)$ when $T_1(t-1) = s$.

\paragraph{Second half dynamics}
Define threshold $\lambda_n$ by
\begin{align*}
\lambda_n = n + n^2/ (2(n-c_1)) \leq 2 n\,,
\end{align*}
where the latter inequality holds for all sufficiently large $n$.
Let $E$ be the event $E = \{\hat L_{n/2,1} \geq \lambda_n\}$.  
We claim that $\Prob{F_n \mid E} \geq \exp(-c_1/2)$.
Suppose that $t > n/2$ and $E \cap F_t$ occurs. Then 
\begin{align*}
\hat L_{t2} = \sum_{s=n/2+1}^t \frac{1}{P_{s2}} \leq \sum_{s=n/2+1}^t \frac{1}{1 - c_1/n} \leq \frac{n^2}{2(n-c_1)}\,, 
\end{align*}
where the first inequality follows from the definition of $F_t$.
Therefore, since $\hat L_{t1} \geq \hat L_{n/2,1} \geq \lambda_n$, 
\begin{align}
P_{t+1,1} 
&= \nabla g^*(\eta_n(\hat L_{t2} - \hat L_{t1})) \nonumber \\
&\leq \nabla g^*\left(\eta_n\left(\frac{n^2}{2(n-c_1)} - \lambda_n\right)\right) 
\leq \frac{c_1}{n}\,. \label{eq:var1}
\end{align}
Hence $\Prob{F_{t+1} \mid F_t, E} \geq 1 - c_1/n$. Noting that \cref{eq:var1} implies that $P_{n/2+1,1} \leq c_1/n$ shows that $E \subseteq F_{n/2+1}$ and hence by induction
\begin{align}
\Prob{F_n \mid E} \geq \left(1 - \frac{c_1}{n}\right)^{n/2} \geq \exp(-c_1/2)\,.
\label{eq:FE}
\end{align}

\paragraph{Lower bounding $\Prob{E}$}
By \cref{eq:F,eq:FE} it suffices to prove that $\Prob{E}$ is larger than a constant for sufficiently large $n$.
Let $s = \min\{u : Q_u(1/2) \geq \lambda_n\}$, which by our assumptions on $\nabla g^*$ for sufficiently large $n$ is at least $s > 2$ and at most $s \leq n/2$. Then
$Q_s(\alpha_{\circ}) \leq Q_{s-1}(1/2) < \lambda_n \leq Q_s(1/2)$.
By the intermediate value theorem and the continuity of $\alpha \mapsto Q_s(\alpha)$ we may choose 
$\alpha_n \in (\alpha_{\circ}, 1/2]$ such that $Q_s(\alpha_n) = \lambda_n$. 
Now introduce a sequence of independent geometric random variables $(G_u)_{u=0}^s$ with $G_u \in \{1,2,\ldots\}$ and $\E[G_u] = 1/p_u(\alpha)$.
Then by construction,
\begin{align}
\Prob{T_1(n/2) \geq s} = \Prob{\sum_{u=0}^{s-1} G_u \leq \frac{n}{2}}\,. \label{eq:G}
\end{align}
You should think of $G_u$ as the number of rounds before the algorithm plays action $1$ for the $u$th time.
Let
\begin{align*}
\kappa = \min\set{m : \sum_{u=0}^{s-m-1} \frac{1}{p_u(\alpha_n)} \leq \frac{n}{8}}\,.
\end{align*}
Then either $\sum_{u=0}^{s-\kappa-1} 1/p_u(\alpha_n) \leq n/16$ in which case $1/p_{s-\kappa}(\alpha_n) \geq n/16$ or
$\sum_{u=0}^{s-\kappa-1} 1/p_u(\alpha_n) \geq n/16$. Then there exists a constant $c_2 \geq 0$ such that for sufficiently large $n$,
\begin{align*}
p_{s-\kappa}(\alpha_n) 
&= \nabla g^*(-\eta_n Q_{s-\kappa-1}(\alpha_n)) \\
&= \nabla g^*\left(-\eta_n \sum_{u=0}^{s-\kappa-1} \frac{\alpha_n}{p_u(\alpha_n)}\right) \\
&\leq \nabla g^*\left(-\frac{\alpha_{\circ} n\eta_n}{16}\right) 
\leq \frac{c_2}{n}\,.
\end{align*}
Combining the two cases and choosing $c_2 \geq 16$ guarantees that $p_{s-\kappa}(\alpha_n) \leq c_2/n$ for sufficiently large $n$.
Using the fact that $s \mapsto p_s(\alpha_n)$ is decreasing,
\begin{align*}
2 n 
&\geq \lambda_n 
= \sum_{u=0}^{s-1} \frac{\alpha_n}{p_u(\alpha_n)} 
\geq \sum_{u=s-\kappa}^{s-1} \frac{\alpha_{\circ} n}{c_2} 
= \frac{\kappa \alpha_{\circ} n}{c_2}\,. 
\end{align*}
Rearranging shows that $\kappa$ is less than a constant that is independent of $n$. 
By Markov's inequality
\begin{align*}
&\Prob{\sum_{u=0}^{s-\kappa-1} G_u \geq \frac{n}{4}} \\
&\qquad\qquad\leq \Prob{\sum_{u=0}^{s-\kappa-1} G_u \geq 2\sum_{u=0}^{s-\kappa-1} \frac{1}{p_u(\alpha_n)}} 
\leq \frac{1}{2}\,.
\end{align*}
Hence
\begin{align}
\Prob{\sum_{u=0}^{s-\kappa-1} G_u < \frac{n}{4}} \geq \frac{1}{2}\,. \label{eq:G1}
\end{align}
Furthermore,  
\begin{align*}
\frac{\alpha_{\circ}}{p_{s-1}(\alpha_n)} 
&\leq \frac{\alpha_n}{p_{s-1}(\alpha_n)} 
\leq \sum_{u=0}^{s-1} \frac{\alpha_n}{p_u(\alpha_n)} \\
&= Q_s(\alpha_n) = \lambda_n \leq 2 n\,. 
\end{align*}
Therefore, using again that $s\mapsto p_s(\alpha)$ is decreasing,
\begin{align*}
&\Prob{\sum_{u=s-\kappa}^{s-1} G_u \leq \frac{n}{4}} \\
&\quad\geq {n/4 \choose \kappa} p_{s-1}(\alpha_n)^\kappa \left(1 - p_{s-\kappa}(\alpha_n)\right)^{n/4-\kappa} \\
&\quad\geq {n/4 \choose \kappa} \left(\frac{\alpha_{\circ}}{2 n}\right)^\kappa \left(1 - \frac{c_2}{n}\right)^{n/4-\kappa}\,, 
\end{align*}
which for sufficiently large $n$ is larger than a strictly positive constant and the result follows by combining the above with \cref{eq:G,eq:G1}.
\end{proof}

\begin{remark}
We believe the result continues to hold for adaptive learning rates under the
assumption that $\limsup_{t\to\infty} t \nabla g^*(-at \eta_t) < \infty$ for all $a > 0$.
The proof becomes significantly more delicate, however.
\end{remark}

\section{LINEARLY SEPARABLE BANDITS}

In this section we consider the case where the adversary chooses an infinite sequence of loss vectors $(\ell_t)_{t=1}^\infty$. The main objective is to prove logarithmic (or better) regret
under the following assumption.

\begin{assumption}\label{ass:linear}
There is a linear separation between the optimal and suboptimal arms:
\begin{align*}
\Delta_i = \liminf_{n\to\infty} (L_{ni} - L_{n1})/n > 0 \quad \text{for all } i > 1\,. 
\end{align*}
\end{assumption}

Note that if $(\ell_t)_{t=1}^\infty$ are independent and identically distributed random vectors, then the above holds almost surely whenever
there is a unique optimal arm. We provide two results in this setting.
The first generalises a known result from stochastic bandits that there exist algorithms for which the asymptotic random regret grows arbitrarily slowly almost surely \citep{CK15c}.

\begin{theorem}\label{thm:asy}
For any nondecreasing function $f : \N \to \N$ with $\lim_{n\to\infty} f(n) = \infty$ there exists
an algorithm such that $\limsup_{n\to\infty} \hat R_n / f(n) < \infty$ almost surely.
\end{theorem}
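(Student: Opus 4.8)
The plan is to build an algorithm that spends almost all of its rounds running a minimax-optimal adversarial bandit strategy (so that regret is always controlled in the worst case) but, on a vanishingly sparse sequence of ``probe'' rounds, does forced exploration of every arm in order to detect which arm is optimal under Assumption \ref{ass:linear}. The key observation is that $f$ grows without bound, so we can afford a schedule of probe rounds whose density decays slowly enough that the cumulative number of probes up to round $n$ is $o(f(n))$, yet still infinite. On the non-probe rounds we commit to the arm that currently looks best according to the empirical loss differences estimated from the probe data. Because the gaps $\Delta_i$ are bounded away from zero in the $\liminf$ sense, after finitely many probes of each arm the empirical ranking will be correct forever, and from that point on the algorithm only pays regret on probe rounds.

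Concretely, first I would fix an increasing sequence of probe times. A clean choice: designate round $t$ a probe round if $t = m \lceil g(m) \rceil$ for some $m \in \N$, where $g$ is a slowly growing function (e.g. $g(m) = \log f(m)$ or similar) chosen so that the number $N(n)$ of probe rounds in $[1,n]$ satisfies $N(n) = o(f(n))$ and $N(n) \to \infty$; moreover arrange that each arm $i \in [k]$ is probed infinitely often (cycle through the arms on successive probe rounds). On each probe round we simply play the prescribed arm $A_t = i$ and record $\hat\ell_{ti} = \ell_{ti}$ (this is an honest observation, no importance weighting needed). Second, for a non-probe round $t$, let $\hat L^{\mathrm{pr}}_{t,i}$ be the sum of observed losses of arm $i$ over probe rounds so far, rescaled by the number of times $i$ has been probed and multiplied by $t$ to get an estimate of $L_{ti}$; play $A_t = \argmin_i \hat L^{\mathrm{pr}}_{t,i}$ (ties broken arbitrarily, say lowest index).

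Third, I would establish almost-sure eventual correctness. By Assumption \ref{ass:linear}, for each $i>1$ there is $\delta_i>0$ and $n_0$ with $(L_{ni}-L_{n1})/n \ge \delta_i$ for all $n\ge n_0$; there is no randomness in the losses themselves, and the only randomness in the algorithm is in the (deterministic, here) probe schedule — in fact the whole algorithm can be made deterministic, so ``almost surely'' is automatic and the argument is purely deterministic. The per-probe averages of arm $i$ converge (along the probe subsequence) in the sense that $\frac{1}{T^{\mathrm{pr}}_i(n)}\sum (\text{probe losses of }i)$ tracks the running average loss of arm $i$, which by the assumption separates the optimal arm from the rest for all large $n$. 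Hence there is a (data-dependent but finite) round $\tau$ after which every non-probe round plays arm $1$. Finally, bound the regret: for $n \ge \tau$,
\begin{align*}
\hat R_n = \max_{i}\sum_{t=1}^n(\ell_{tA_t}-\ell_{ti}) \le \hat R_\tau + \sum_{t=\tau+1}^n \one{t \text{ is a probe round}} \le \hat R_\tau + N(n)\,,
\end{align*}
since on non-probe rounds past $\tau$ we play the optimal arm and contribute nothing (using Assumption 1 that arm $1$ is optimal; more carefully, $\ell_{tA_t}-\ell_{t1}\le 1$ only on probes and $\le 0$ off probes after $\tau$). Because $\hat R_\tau$ is a finite constant and $N(n)=o(f(n))$, we get $\limsup_{n\to\infty}\hat R_n/f(n) \le \limsup_{n\to\infty} N(n)/f(n) = 0 < \infty$.

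The main obstacle is the delicate choice of the probe schedule: it must simultaneously (i) be sparse enough that $N(n)=o(f(n))$ for the given, possibly very slowly growing, $f$; (ii) be dense enough that each arm is probed infinitely often so that the empirical ranking is eventually correct; and (iii) interact correctly with the $\liminf$ (rather than $\lim$) formulation of Assumption \ref{ass:linear} — since only a $\liminf$ is assumed, the running averages need not converge, so one must argue directly that for all large $n$ the estimated cumulative loss of arm $1$ is strictly smallest, which requires the probe-based estimate of $L_{ti}$ to be accurate to within $o(n)$ additive error uniformly, and that in turn constrains how sparse the probes of each individual arm can be. Balancing these three requirements against an arbitrary $f$ is the crux; everything else is bookkeeping.
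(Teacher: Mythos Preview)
Your high-level architecture (sparse forced exploration plus commit to the empirically best arm) matches the paper's, but there is a genuine gap: a \emph{deterministic} probe schedule cannot work here, and your claim that ``the whole algorithm can be made deterministic, so `almost surely' is automatic and the argument is purely deterministic'' is exactly where the proof breaks. The assertion that the per-probe average of arm $i$ ``tracks the running average loss of arm $i$'' is false in general for adversarial losses. Since the loss sequence is fixed in advance by an adversary who knows your algorithm (and hence your probe times), the adversary can make the probe rounds arbitrarily unrepresentative of the overall average while still satisfying \cref{ass:linear}. For a concrete counterexample with $k=2$: set $\ell_{t1}=1$ on every probe round for arm $1$ and $\ell_{t1}=0$ otherwise, and $\ell_{t2}=0$ on every probe round for arm $2$ and $\ell_{t2}=1$ otherwise. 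Since probes have vanishing density, $L_{n1}=o(n)$ and $L_{n2}=n-o(n)$, so $\Delta_2=1$ and the assumption holds; yet your probe-based averages report arm $1$ with loss $1$ and arm $2$ with loss $0$, and the algorithm commits to arm $2$ forever, incurring $\hat R_n = \Theta(n)$.

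The paper's proof resolves this by making the probe times themselves \emph{random}: with $\tau(m)=\min\{t:f(t)=m\}$, the $m$th exploration time $E_m$ is drawn uniformly from $\{1,\ldots,\tau(m)\}\setminus\{E_1,\ldots,E_{m-1}\}$, and on each $E_m$ the arm is chosen uniformly at random. This makes $\E[\ell_{E_m i}A_{E_m i}\mid \cG_{m-1}]$ equal (up to $O(m/\tau(m))$) to the running average $\tfrac{1}{\tau(m)}L_{\tau(m),i}$, so a martingale strong law (Chow's) gives $\hat\theta_{mi}-\hat\theta_{m1}\to\Delta_i>0$ almost surely, and Borel--Cantelli controls the number of exploration rounds by $f(n)$. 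The randomness is not a technicality; it is what converts the adversarial problem into one where concentration can be invoked. Your identification of obstacle (iii) was on the right track, but the resolution is not a clever choice of sparse deterministic schedule---none will do---it is randomising the probe times.
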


The algorithm realising the bound in \cref{thm:asy} explores uniformly at random on a set $E$ for which
$\limsup_{n\to\infty} |E \cap [n]| / f(n) \leq 1$ almost surely. 
The reader is warned that the constants hidden by the asymptotics are potentially quite enormous.

Of course this result says nothing about the expected regret, which must be logarithmic for consistent algorithms \citep{LR85}.
The following theorem improves on a result by \cite{SS14} by a factor of $\log(n) / \log\log(n)$. 

\begin{theorem}\label{thm:linear}
There exists an algorithm such that for any adversarial bandit $R_n = \cO(\sqrt{kn})$.
Furthermore, under \cref{ass:linear} it holds that
\begin{align*}
\limsup_{n\to\infty} \frac{R_n}{\log(n)^2 \log\log(n)} < \infty\,. 
\end{align*}
\end{theorem}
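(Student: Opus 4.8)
The plan is to build a single ``best-of-both-worlds'' algorithm in the tradition of the SAO strategy of \cite{SS14} and to prove the two guarantees separately. The algorithm proceeds in epochs $m=1,2,\dots$ of geometrically increasing length $T_m \asymp 2^m$. Each epoch carries a set of active arms $\cA_m \subseteq [k]$; inside the epoch it plays the known-horizon chopped-simplex hybrid FTRL of \cref{corollary:hybrid-known-horizon}, restarted with horizon $T_m$ and arm set $\cA_m$, and in addition reserves a sparse, prescheduled subset of the rounds of the epoch for forced uniform exploration over all $k$ arms (in the spirit of the exploration set used for \cref{thm:asy}). The exploration rounds yield unbiased, low-variance estimates of the per-epoch average losses; at the end of epoch $m$ the algorithm (i) removes from $\cA_{m+1}$ every arm whose estimated average loss exceeds the smallest estimate by more than a confidence radius $\varepsilon_m$, and (ii) runs a ``safeguard'' consistency test; if this test ever fails, it discards all eliminations and runs the unrestricted minimax hybrid FTRL for all remaining rounds.

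The unconditional $R_n = O(\sqrt{kn})$ bound is essentially bookkeeping. Inside epoch $m$ the regret against the best arm of $\cA_m$ is $O(\sqrt{k T_m})$ by \cref{corollary:hybrid-known-horizon}; since there are $O(\log n)$ epochs and $\sum_m \sqrt{kT_m} = O(\sqrt{kn})$ this telescopes correctly, the forced-exploration rounds cost only poly-logarithmically, and the safeguard is calibrated so that (a) in any environment it triggers within $O(\sqrt{kn})$ rounds of a sustained inconsistency, so that a fallback leaves $O(\sqrt{kn})$ remaining regret by \cref{corollary:hybrid-known-horizon}, and (b) it removes the optimal arm from some $\cA_m$ only on an event of small probability, whose contribution is absorbed.

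For the gap-dependent bound under \cref{ass:linear}: since $\liminf_n (L_{ni}-L_{n1})/n = \Delta_i>0$, there is a sequence-dependent but horizon-independent $N_0$ beyond which $L_{ni}-L_{n1} \ge \tfrac12\Delta_i n$ for all $i>1$, so on every sufficiently long epoch the true average-loss gap of each suboptimal arm is at least $\tfrac14\Delta_i$. An Azuma--Hoeffding bound for the martingale generated by the forced uniform exploration (no i.i.d.\ assumption is used) controls the estimation error by $\varepsilon_m$; choosing the exploration schedule and the per-epoch failure probabilities $\delta_m$ with $\sum_m\delta_m<\infty$ defines a ``good event'' of constant probability on which, from some epoch $m^\star$ onwards, $\cA_m=\{1\}$ and the safeguard never fires. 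On that event the regret is the (constant) cost of epochs $\le m^\star$ plus the residual cost of monitoring: the $O(\log n)$ later epochs each contribute an amount governed by the confidence radius needed against adversarial, importance-weighted deviations, and summing these with the refined union bound over epochs --- a peeling argument that replaces the per-round unions of \cite{SS14} and is exactly the source of the improvement from a $\log n$ to a $\log\log n$ factor --- gives the $O(\log(n)^2\log\log(n))$ bound. One then removes the conditioning: off the good event the regret is still $O(\sqrt{kn})=o(n)$, so letting the good event sharpen as the horizon grows preserves the $\limsup$ statement.

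The main obstacle is the joint calibration of the safeguard. It must be sensitive enough that no adversary can bank more than $O(\sqrt{kn})$ regret before a fallback, yet conservative enough that under the mere $\liminf$ condition of \cref{ass:linear} --- which is far weaker than stochastic losses and gives no quantitative handle on $N_0$ --- it almost surely stops firing eventually, so that the elimination mechanism is allowed to do its job. Making this work forces every concentration statement to be phrased for martingales adapted to the algorithm's own randomisation, and forces all constants (but not the $n$-dependence) in the gap-dependent bound to depend on the loss sequence; getting the union bound over epochs down to a $\log\log n$ factor while keeping the test valid in both regimes is the delicate quantitative core of the proof.
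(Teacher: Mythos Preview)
Your proposal takes a fundamentally different route from the paper, and the part that delivers the headline rate is not actually pinned down.

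The paper does \emph{not} use epochs, elimination, active sets, or a safeguard. It simply runs INF with a decreasing learning rate $\eta_t=1/\sqrt{t}$ and mixes in uniform exploration at rate $\gamma_t = \log(t)\log\log(t)/t$, i.e.\ $P_t = (1-\gamma_t)\tilde P_t + \gamma_t\ones/k$. The worst-case $O(\sqrt{kn})$ is the standard INF bound plus the negligible $\sum_t\gamma_t = o(\sqrt n)$ exploration cost. For the gap-dependent bound, the structural lemma is that whenever $\hat L_{t-1,i}>\hat L_{t-1,1}$ one has $\tilde P_{ti}\le 1/(\eta_t^2(\hat L_{t-1,i}-\hat L_{t-1,1})^2)$, so once $\hat L_{t,i}-\hat L_{t,1}\ge t\Delta_i/2$ holds, INF puts only $O(1/(t\Delta_i^2))$ mass on arm $i$ without any explicit elimination. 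Letting $\tau_i$ be the last time this fails, the regret is at most $\E[\tau]+\sum_t\gamma_t+O(\log n)$. Finiteness of $\E[\tau_i]$ is obtained from a finite-time law of the iterated logarithm for the martingale $\hat L_{ti}-L_{ti}+L_{t1}-\hat L_{t1}$, whose conditional variance is $O(1/\gamma_t)$; this is precisely what forces the choice of $\gamma_t$ and is where the $\log\log n$ appears. The final rate is literally $\sum_{t\le n}\gamma_t = O(\log(n)^2\log\log(n))$.

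In your scheme, by contrast, the $\log(n)^2\log\log(n)$ is asserted rather than derived. After $\cA_m=\{1\}$ the only remaining cost is the forced exploration plus whatever the safeguard consumes, so the rate is entirely dictated by the exploration schedule and the safeguard calibration --- neither of which you specify. Your attribution of the $\log\log n$ to ``a peeling argument that replaces per-round unions'' is not how it arises here; in the paper it comes from the LIL-type concentration combined with the specific $\gamma_t$. Moreover, the safeguard you describe has to simultaneously (i) guarantee at most $O(\sqrt{kn})$ regret before triggering in every adversarial sequence, and (ii) eventually stop firing under the mere $\liminf$ condition of \cref{ass:linear}; you correctly flag this as the crux but give no construction, and SAO-style tests typically cost at least a $\log n$ factor per epoch, which would land you back at $\log^3 n$. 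So as written the proposal is a plausible research plan in the SAO lineage, but it is not a proof of the stated rate, and it misses the paper's much simpler mechanism: let INF do its own soft elimination and tune only the exploration rate.
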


The algorithm is INF with enough forced exploration that the loss estimators are guaranteed to be sufficiently accurate to detect a linear separation. The proofs of \cref{thm:asy,thm:linear} use standard concentration results and are given in
\ifsup
\cref{app:asy} and \cref{app:linear} respectively.
\else
the supplementary material.
\fi

\section{OPEN QUESTIONS}

Despite the relatively long history and extensive research, many open questions exist about $k$-armed adversarial bandits. Perhaps the most exciting question is the existence/nature of a genuinely instance-optimal algorithm. The work by \cite{ZiSe18} suggests the possibility of an algorithm for which $R_n = \cO(\sqrt{kn})$ and $R_n = \cO(\sum_{i : \Delta_i > 0} \log(n) / \Delta_i)$, where $\Delta_i = \frac{1}{n} \sum_{t=1}^n (\ell_{ti} - \ell_{t1})$ is the empirical gap between the arms. In fact, one could hope for a little more. For stochastic Bernoulli bandits with means $(\theta_i)_{i=1}^k$, the KL-UCB algorithm by \cite{CGMMS13} satisfies $R_n=\cO(\sum_{i:\Delta_i>0} \Delta_i \log(n) / d(\theta_i, \theta^*))$ where $d(\theta_i, \theta_1)$ is the relative entropy between Bernoulli distributions with bias $\theta_i$ and $\theta_1$ respectively. We are not aware of a lower bound proving that such a result is not possible for adversarial bandits with $\theta_i = \frac{1}{n} \sum_{t=1}^n \ell_{ti}$. At present it is not clear whether or not our modified algorithm from \cref{corollary:hybrid-first-order} retains the logarithmic regret in the stochastic setting, both because we use an adaptive learning rate and a hybrid potential. Finally, it is known that sub-exponential tail bounds are incompatible with logarithmic regret in the stochastic setting \citep{AuMuSze09}, but by appropriately tuning the confidence intervals it is straightforward to prove the variance is linear in $n$, which is optimal.
Missing is an adaptation of INF that enjoys (a) minimax regret, (b) logarithmic regret in the stochastic setting and (c) linear variance.

\subsubsection*{Acknowledgements}

\vspace{-1pt}This work was supported by the Gatsby Charitable Foundation. The authors thank Haipeng Luo for spotting  an error in the earlier version of the manuscript.

% \subsubsection*{References}
\bibliographystyle{plainnat}
\bibliography{all}

\ifsup
\appendix

\section{TECHNICAL INEQUALITIES}\label{app:tech_inequalities}

\begin{proof}[Proof of \cref{lemma:sqrt_sum_bound}]
The result is immediate if $\sum_{t=1}^n x_t < B$. Otherwise 
let $t_{\circ} = \min\{t : \sum_{s=1}^{t-1} x_s \geq B\}$.
Then
\begin{align*}
\sum_{t=1}^n \frac{x_t}{\sqrt{1 + \sum_{s=1}^{t-1} x_s}} 
&\leq B + \sum_{t=t_{\circ}}^n \frac{x_t}{\sqrt{1 + \frac{1}{2} \sum_{s=1}^t x_s}}\,.
\end{align*}
Next let $f(t) = x_{\ceil{t}}$ and $F(t) = \int^{t}_0 f(s) ds$. Then
\begin{align*}
&\sum_{t=t_{\circ}}^n \frac{x_t}{\sqrt{1 + \frac{1}{2} \sum_{s=1}^t x_s}}
\leq \int^n_0 \frac{f(t)}{\sqrt{1 + F(t) / 2}} dt \\
&\quad\leq 4 \sqrt{1 + F(n)/2} 
= 4 \sqrt{1 + \frac{1}{2} \sum_{t=1}^n x_t}\,.
\end{align*}
The result follows from the previous two displays.
\end{proof}

\begin{lemma}\label{lemma:integral_bound}
Let $(x_t)_{t=t_{\circ}}^{\infty}$ be a sequence of positive non-decreasing elements, and $f(x)$ be a continuous non-increasing functions such that $f(t)=x_t,\ t\geq t_{\circ}$. Then
\[
\sum_{t=t_{\circ}}^n x_t \leq x_{t_{\circ}} + \int_{t_{\circ}}^n f(t)dt.
\]
\end{lemma}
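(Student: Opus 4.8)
The plan is to reduce this to the classical sum--integral comparison for monotone functions. (As literally stated the hypotheses are slightly off: a strictly non-decreasing sequence agreeing at integer points with a non-increasing $f$ forces $x_t$ to be constant; I read the intended hypothesis as the sequence $(x_t)$ being non-increasing, consistent with the monotonicity of $f$ and with the way the lemma is invoked later, e.g. in \cref{lemma:v_n-bound} and \cref{lemma:hybrid-potential-diff}.)

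First I would peel off the initial term, writing $\sum_{t=t_{\circ}}^n x_t = x_{t_{\circ}} + \sum_{t=t_{\circ}+1}^n x_t$. Then, for each integer $t$ with $t_{\circ}+1 \le t \le n$, I use that $f$ is non-increasing: for every $s \in [t-1,t]$ we have $x_t = f(t) \le f(s)$, and integrating this over the unit interval $[t-1,t]$ gives $x_t \le \int_{t-1}^t f(s)\,ds$.

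Summing this bound over $t = t_{\circ}+1,\dots,n$, the integrals telescope over the consecutive unit intervals into a single integral, so $\sum_{t=t_{\circ}+1}^n x_t \le \int_{t_{\circ}}^n f(s)\,ds$. Adding back the term $x_{t_{\circ}}$ that was peeled off yields $\sum_{t=t_{\circ}}^n x_t \le x_{t_{\circ}} + \int_{t_{\circ}}^n f(s)\,ds$, which is the claim.

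There is no substantive obstacle here: the only things to be careful about are the direction of the monotonicity inequality, the fact that continuity of $f$ on $[t_{\circ},n]$ makes each integral well defined, and the bookkeeping that the unit intervals exactly tile $[t_{\circ},n]$. Everything else is routine.
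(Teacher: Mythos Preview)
Your proof is correct and is exactly the argument the paper has in mind: the paper's own proof is the single line ``Follows from the geometric definition of the Riemann integral,'' and you have simply spelled out that comparison (peel off $x_{t_\circ}$, bound each remaining term by the integral of $f$ over the preceding unit interval, and telescope). Your observation that the stated monotonicity of $(x_t)$ is a typo and should read ``non-increasing'' is also correct and worth noting.
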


\begin{proof}
Follows from the geometric definition of the Riemann integral.
\end{proof}

\begin{lemma}\label{lemma:log-sqrt-sum}
    If a non-increasing learning rate $\eta_t$ is such that $1 / \eta_t \leq \sqrt{1 + t\sqrt{k}\log^{1+q}t} / \eta_0$, then
    \begin{align*}
        &\sum_{t=1}^n \frac{2k}{\eta_t t^2} \brackets{\sqrt{t} + \frac{t}{\sqrt{k}\log^{1+q}\max\set{3, t}}} \\
        &\quad \leq \frac{\sqrt{k}}{\eta_{n+1}}\brackets{\frac{4}{3} + \frac{2}{q}} + \frac{5.5 k}{\eta_0}\sqrt{1 + 9k^{3/2}\log^{1+q}(9k^{3/2})} \\
        &\quad +\frac{3.7\sqrt{k}}{{\eta_0}}\sqrt{1 + 3\sqrt{k}\log^{1+q}3}\,.
    \end{align*}
\end{lemma}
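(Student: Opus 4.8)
The plan is to split the sum according to the two terms in the parentheses: write
$S_1 = \sum_{t=1}^n \frac{2k}{\eta_t\, t^{3/2}}$ for the contribution of $\sqrt t$ and
$S_2 = \sum_{t=1}^n \frac{2\sqrt k}{\eta_t\, t\,\log^{1+q}\max\{3,t\}}$ for the other, and bound each by cutting the range of $t$ at a threshold depending on $k$, treating the head of the sum with the hypothesised bound $1/\eta_t \le \sqrt{1 + t\sqrt k\log^{1+q}t}/\eta_0$ and the tail with the monotonicity $1/\eta_t \le 1/\eta_{n+1}$ (which holds since $\eta_t$ is non-increasing).

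For $S_1$ I would cut at $T_1$ of order $k$ (concretely $T_1 = 9k$). On the tail $t > T_1$ pull $1/\eta_{n+1}$ out and compare $\sum_{t>T_1} t^{-3/2}$ with $\int_{T_1}^\infty t^{-3/2}\,dt = 2/\sqrt{T_1}$ using the decreasing-summand version of \cref{lemma:integral_bound}; since $T_1$ is proportional to $k$, the prefactor $2k$ becomes a multiple of $\sqrt k$, which produces the $\tfrac43\,\sqrt k/\eta_{n+1}$ piece. On the head $t \le T_1$, use that $t\mapsto t\sqrt k\log^{1+q}t$ is increasing to replace $t$ by $T_1$ inside the square root, leaving $\sum_{t\le T_1} t^{-3/2}\le \zeta(3/2) < 2.62$; after the relaxation $\log(9k)\le\log(9k^{3/2})$ (valid since $k\ge 1$) this is at most $\tfrac{5.5k}{\eta_0}\sqrt{1 + 9k^{3/2}\log^{1+q}(9k^{3/2})}$, since $2\,\zeta(3/2)<5.5$.

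For $S_2$ I would cut at the constant $t=3$. On the tail $t\ge 3$ bound $1/\eta_t\le 1/\eta_{n+1}$ and, since $t\mapsto 1/(t\log^{1+q}t)$ is decreasing for $t\ge 3$, compare $\sum_{t\ge 3}1/(t\log^{1+q}t)$ with $\int_3^\infty dt/(t\log^{1+q}t) = 1/(q\log^q 3)$ via \cref{lemma:integral_bound}; using $\log 3 > 1$ this gives the $\tfrac2q\,\sqrt k/\eta_{n+1}$ piece. On the head $t\in\{1,2\}$, where $\max\{3,t\}=3$, apply the hypothesis at $t=2$ to get a term at most $\tfrac{3.7\sqrt k}{\eta_0}\sqrt{1 + 3\sqrt k\log^{1+q}3}$. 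Adding the four contributions yields the claimed inequality.

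The obstacle is bookkeeping rather than ideas: one must pick the two cutoffs and track all numerical constants (the $9$, $5.5$, $3.7$, and the $\tfrac43+\tfrac2q$) so that the four pieces genuinely sit under the stated bound, and one must check the monotonicity facts that make \cref{lemma:integral_bound} applicable — in particular that $t\mapsto t\sqrt k\log^{1+q}t$ is increasing and $t\mapsto 1/(t\log^{1+q}t)$ is decreasing on the relevant ranges, the latter forcing the logarithmic cutoff at $t=3$. A minor further point is that the cutoff $T_1=9k$ need not be an integer, so one works with $\lfloor T_1\rfloor$ and absorbs the rounding into the constants.
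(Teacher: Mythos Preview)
Your plan is essentially the paper's proof: split into $S_1$ and $S_2$, cut $S_1$ at $t_\circ=9k$ and $S_2$ at $t_\circ=3$, bound each head via the hypothesis evaluated at the cutoff (equivalently, monotonicity $1/\eta_t\le 1/\eta_{t_\circ}$) together with $\sum_{t\ge 1}t^{-3/2}<2.7$ and $\log^{1+q}3>1$, and bound each tail via $1/\eta_t\le 1/\eta_{n+1}$ plus an integral comparison. One bookkeeping point: \cref{lemma:integral_bound} applied to the $S_2$ tail starting at $t=3$ leaves a boundary term $x_3$, so your tail would actually contribute $\tfrac{\sqrt k}{\eta_{n+1}}\bigl(\tfrac23+\tfrac2q\bigr)$ rather than $\tfrac{2\sqrt k}{q\,\eta_{n+1}}$; the paper avoids this by placing $t=3$ in the head (bounded through $1/\eta_3$), which is exactly what produces the head constant $2\bigl(1+\tfrac12+\tfrac13\bigr)<3.7$ and keeps the $\sqrt k/\eta_{n+1}$ coefficient at $\tfrac43+\tfrac2q$ as stated.
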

\begin{proof}
    Consider the first part of the sum. Splitting it at $t_{\circ} = 9k$, applying \cref{lemma:integral_bound} and using $\sum_{t=1}^{\infty} 1/t^{3/2} \leq 2.7$, 
    \begin{align*}
        &\sum_{t=1}^n \frac{2k}{\eta_t t^{3/2}} \leq \sum_{t=1}^{t_{\circ}} \frac{2k}{\eta_{t_{\circ}} t^{3/2}} + \sum_{t=t_{\circ}}^n \frac{2k}{\eta_t t^{3/2}}\\ &\qquad \leq \frac{5.4k}{\eta_{t_{\circ}}} + \frac{2k}{\eta_{t_{\circ}}t_{\circ}^{3/2}} + \frac{4k}{\eta_{n+1}t_{\circ}^{1/2}}\\
        &\qquad \leq \frac{5.5 k}{\eta_0}\sqrt{1 + 9k^{3/2}\log^{1+q}(9k^{3/2})} + \frac{4\sqrt{k}}{3\eta_{n+1}}\,.
    \end{align*}
    
    For the second sum, using the integral of $1 / (x\log^{1+q}x)$ from $t_{\circ} = 3$ to $\infty$ in \cref{lemma:integral_bound} and $\log n \geq \log 3 > 1$,
    \begin{align*}
        &\sum_{t=1}^n \frac{2\sqrt{k}}{\eta_t t \log^{1+q}\max\set{3, t}} \leq \frac{2\sqrt{k}}{\eta_3}\brackets{1 + \frac{1}{2} + \frac{1}{3}} \\
        &\qquad\qquad + \int_{t=3}^n \frac{2\sqrt{k}}{\eta_{n+1} t \log^{1+q}\max\set{3, t}}\\ 
        &\qquad\qquad \leq\frac{3.7\sqrt{k}}{\eta_3} + \frac{2\sqrt{k}}{\eta_{n+1}q}.
    \end{align*}
    
    Combining the two completes the proof.
\end{proof}

\section{PROOF OF COROLLARY~\ref{corollary:hybrid-known-horizon}}\label{app:fo-corollary}

\begin{proof}[Proof of \cref{corollary:hybrid-known-horizon}]
    First, we repeat the proof of \cref{thm:ftrl} and note that for a time-independent $\cA_t$ we have $v_n = 0$. Therefore, the regret is bounded as
    \begin{align*}
        R_n 
        &\leq \EE{\sum_{t=1}^n \dotprod{P_t - P_{t+1}}{\hat \ell_t} - D_{F_t}(P_{t+1}, P_t)} \\
        &\quad+ \EE{\min_{p \in \cA_{n+1}} F_{n+1}(p)+k - F_1(P_1)} \\
        &\quad+ \EE{\sum_{t=1}^n\brackets{F_{t}(P_{t+1}) - F_{t+1}(P_{t+1})}} \,.
    \end{align*}
    
    Now we repeat the proof of \cref{lemma:v_n-bound} to find $h_1, h_2(n)$. The argument for the INF term in the regularizer is unchanged.
    
    For the log-barrier term, due to the non-decreasing learning rate, at each step
    \[
    \brackets{\frac{1}{\sqrt{k} \eta_{t} \log n} - \frac{1}{\sqrt{k} \eta_{t+1} \log n}}(-\log P_{ti}) \leq 0.
    \]
    Therefore, the only contribution from the log-barrier is from $F_{n+1}(\tilde p) \leq \sqrt{k} / \eta_{n+1}$.
    
    Consequently,
    \begin{align*}
        &F_{n+1}(\tilde p) - F_1(P_1) + \sum_{t=1}^n\brackets{F_{t}(P_{t+1}) - F_{t+1}(P_{t+1})}  \\
        &\qquad\qquad\leq \frac{3\sqrt{k}}{\eta_{n+1}}\,,
   \end{align*}
   and thus $h_1 = 3\sqrt{k},\ h_2 = k$.
   
   Repeating the calculation for the Hessian (essentially for $q=0$), we have that $B = \sqrt{k}\log n$ and $C = 2\sqrt{k}$.
   
   Now using the general bound developed in \cref{theorem:hessian-first-order} with $\eta_0 = k^{1/4}\sqrt{3}/2^{1/4}$, we obtain the statement of the corollary.
    
\end{proof}

\section{PROOF OF THEOREM~\ref{thm:asy}}\label{app:asy}

Define $\tau(m) = \min\{t : f(t) = m\}$.
We assume without loss of generality that $f(1) = 1$ and that $f$ grows sufficiently slowly so that
\begin{align}
\sum_{m=1}^\infty \sum_{j=m+1}^\infty \frac{\tau(m)}{\tau(j)} < \infty\,, \label{eq:exp-sum} 
\end{align}
Then let $(E_m)_{m=1}^\infty$ be
an infinite sequence of random variables with $E_m$ uniformly distributed on $\{1,\ldots,\tau(m)\} \setminus \{E_1,\ldots,E_{m-1}\}$. 
Let $E = \{E_1,\ldots\}$ be the set of steps on which exploration occurs and 
\begin{align*}
\hat \theta_{mi} = \frac{k}{m} \sum_{j=1}^m \ell_{E_j i} A_{E_j i}\,.
\end{align*}
Then in rounds $t \in E$ the algorithm explores uniformly over all actions.
In rounds $t \notin E$ the algorithm chooses
\begin{align*}
A_t = \argmin_{i \in [k]} \hat \theta_{i f(t-1)}\,.
\end{align*}
Let $\kappa = \max\{t : \hat \theta_{m1} \geq \min_{i > 1} \hat \theta_{mi}\}$.
Then the regret can be decomposed by
\begin{align*}
\hat R_n 
&= \sum_{t=1}^n (\ell_{tA_t} - \ell_{t1}) \leq \kappa + \sum_{t=1}^n \one{t \in E}\,. 
\end{align*}
The result follows by showing that $\kappa$ is almost surely finite and that
\begin{align}
\limsup_{n\to\infty} \frac{1}{f(n)} \sum_{t=1}^n \one{t \in E} \leq 1 \,\, a.s.
\label{eq:gap-sum}
\end{align}
To show \cref{eq:gap-sum},
\begin{align*}
&\Prob{\sum_{t=1}^{\tau(m)} \one{t \in E} \geq m + 1}  \\
&\qquad\leq \sum_{j=m+1}^\infty \Prob{E_j \leq \tau(m)} 
\leq \sum_{j=m+1}^\infty \frac{\tau(m)}{\tau(j)} \,. 
\end{align*}
By Borel-Cantelli and \cref{eq:exp-sum},
\begin{align*}
\limsup_{m\to\infty} \frac{1}{m} \sum_{t=1}^{\tau(m)} \one{t \in E} \leq 1 \,\, a.s.
\end{align*}
Therefore
\begin{align*}
&\limsup_{n\to\infty} \frac{1}{f(n)} \sum_{t=1}^n \one{t \in E} \\
&\qquad\leq \limsup_{n\to\infty} \frac{\sum_{t=1}^{\tau(f(n)+1)} \one{t \in E} }{f(\tau(f(n)+1)) - 1} 
\leq 1 \,\, a.s. 
\end{align*}
For the first part let $X_{mi} = k A_{E_m i} \ell_{E_mi}$ and $\cG_m = \sigma(E_1,\ldots,E_m)$.
Then
\begin{align*}
&\E[X_{mi} \mid \cG_{m-1}] 
= \E\left[k A_{E_m i} \ell_{E_m A_{E_m}} \mid \cG_{m-1}\right] \\
&= \frac{1}{\tau(m) - m + 1} \sum_{t=1}^{\tau(m)} \ell_{ti} \one{t \notin E_1,\ldots,E_{m-1}} \\
&= \frac{1}{\tau(m)} \sum_{t=1}^{\tau(m)} \ell_{ti} + O\left(\frac{m}{\tau(m)}\right) \,.
\end{align*}
Now fix an $i > 1$ and let $\hat \Delta_m = X_{mi} - X_{m1}$. By the previous display,
$\lim_{m\to\infty} \E[\hat \Delta_m \mid \cG_{m-1}] = \Delta_i$ almost surely. Since $\hat\Delta_m$ is bounded, Chow's strong law of large numbers for martingales \citep{Cho67} shows that
\begin{align*}
&\lim_{m\to\infty} (\hat \theta_{mi} - \hat \theta_{m1})
= \lim_{m\to\infty} \frac{1}{m} \sum_{j=1}^m \hat \Delta_j  \\
&= \Delta_i + \lim_{m\to\infty} \frac{1}{m} \sum_{j=1}^m (\hat \Delta_j - \E[\hat \Delta_j \mid \cG_{j-1}]) = \Delta_i \,\, a.s.
\end{align*}
The result follows because $\Delta_i > 0$ by assumption.

\section{PROOF OF THEOREM~\ref{thm:linear}}\label{app:linear}

Let $F(p) = -2 \sum_{i=1}^k \sqrt{p_i}$ and
consider the modification of INF that chooses
\begin{align*}
\tilde P_t = \argmin_{p \in \Delta^{k-1}}  \ip{p, \hat L_{t-1}} + \frac{F(p)}{\eta_t}
\end{align*}
and $P_t = (1 - \gamma_t) \tilde P_t + \gamma_t \ones / k$ where $(\gamma_t)_{t=1}^\infty$ and $(\eta_t)_{t=1}^\infty$ are appropriately 
tuned sequences of exploration and learning rates. Define
\begin{align*}
g(n) = \frac{1}{n^2} \sum_{t=1}^n \frac{1}{\gamma_t}\,.
\end{align*}

\begin{lemma}\label{lem:diff}
Suppose that $\hat L_{t-1,i} > \hat L_{t-1,1}$. Then
\begin{align*}
\tilde P_{ti} \leq \frac{1}{\eta_t^2(L_{t-1,i} - L_{t-1,1})^2}\,.
\end{align*}
\end{lemma}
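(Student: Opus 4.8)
The plan is to write $\tilde P_t$ in closed form via the first-order optimality conditions and then control the normalising Lagrange multiplier. Since $F(p)=-2\sum_i\sqrt{p_i}$ is Legendre with $\nabla F(p)_i=-1/\sqrt{p_i}\to-\infty$ as $p_i\downarrow 0$, the FTRL minimiser $\tilde P_t$ lies in $\interior(\Delta^{k-1})$, so stationarity of the Lagrangian $\ip{p,\hat L_{t-1}}-\tfrac{2}{\eta_t}\sum_i\sqrt{p_i}+\lambda_t(\sum_i p_i-1)$ gives, for every coordinate $j$,
\[
\tilde P_{tj}=\frac{1}{\eta_t^2(\hat L_{t-1,j}+\lambda_t)^2},
\]
where $\lambda_t$ is the unique multiplier for which $\hat L_{t-1,j}+\lambda_t>0$ for all $j$ and $\sum_j\tilde P_{tj}=1$.

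Next I would lower bound $\lambda_t$. Since $\tilde P_{t1}$ is a probability we have $\tilde P_{t1}\le 1$, and as $\hat L_{t-1,1}+\lambda_t>0$ this forces $\eta_t(\hat L_{t-1,1}+\lambda_t)\ge 1$, i.e. $\lambda_t\ge 1/\eta_t-\hat L_{t-1,1}$. Substituting this into the closed form for arm $i$, and using the hypothesis $\hat L_{t-1,i}>\hat L_{t-1,1}$ to see the denominator is positive,
\[
\tilde P_{ti}=\frac{1}{\eta_t^2(\hat L_{t-1,i}+\lambda_t)^2}\le\frac{1}{\eta_t^2\bigl(\hat L_{t-1,i}-\hat L_{t-1,1}+1/\eta_t\bigr)^2}\le\frac{1}{\eta_t^2(\hat L_{t-1,i}-\hat L_{t-1,1})^2}\,.
\]
This is the deterministic core: the bound holds verbatim with the empirical cumulative losses $\hat L$, and it is the only part of the argument that is a pure fact about the FTRL iterate.

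The remaining step — which I expect to be the real obstacle — is to replace $\hat L_{t-1,i}-\hat L_{t-1,1}$ by the true gap $L_{t-1,i}-L_{t-1,1}$ as the statement is phrased. This cannot be purely deterministic, since $\tilde P_t$ depends on the data only through $\hat L_{t-1}$; it is precisely here that the forced exploration of the algorithm is used. On the favourable event underlying the proof of \cref{thm:linear}, mixing at rate $\gamma_t$ makes $|\hat L_{t-1,j}-L_{t-1,j}|=o(t)$ for each $j$, while \cref{ass:linear} gives $L_{t-1,i}-L_{t-1,1}=\Theta(t)$; hence for all large $t$ the empirical gap dominates the true gap (up to a constant absorbed by the $O(1/t^2)$ decay exploited downstream via $g(n)$), and the displayed inequality transfers to $L$. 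Accordingly I would present the KKT computation above as the substance of the lemma and fold the concentration comparison into the surrounding argument, where the good event is already in force.
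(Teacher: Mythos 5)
Your deterministic derivation is exactly the content of the paper's proof, up to cosmetic differences. The paper solves the FTRL stationarity conditions to get $\tilde P_{tj}=\eta_t^{-2}(\lambda+\hat L_{t-1,j})^{-2}$ with the multiplier $\lambda$ satisfying $\lambda+\hat L_{t-1,j}>0$ for all $j$, then observes $\lambda>-\hat L_{t-1,1}$ and concludes. Your lower bound $\lambda_t\ge 1/\eta_t-\hat L_{t-1,1}$ (from $\tilde P_{t1}\le 1$) is slightly sharper than the paper's $\lambda>-\hat L_{t-1,1}$, but you relax it to the same display, so the two arguments coincide in substance.

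Where you go astray is the final paragraph. The discrepancy you noticed between the hypothesis (stated with $\hat L$) and the displayed conclusion (stated with $L$) is a typo in the lemma, not a gap to be filled with a concentration argument. The paper's own proof plainly delivers the bound with $\hat L_{t-1,i}-\hat L_{t-1,1}$, and in the proof of \cref{thm:linear} the lemma is applied precisely in that form: $\tau_i$ is defined via $\hat L$, and for $t>\tau_i$ one uses $\hat L_{t-1,i}-\hat L_{t-1,1}>t\Delta_i/2$ together with this lemma to get $P_{ti}\le 4/(t\Delta_i^2)$. The passage from $\hat L$ to the true gap $\Delta_i$ happens entirely through the stopping time $\tau_i$ and \cref{lem:mg}, not inside this lemma. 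Moreover, the transfer you sketch would not work as stated: even on the favourable concentration event the empirical gap need not dominate the true gap in the direction required (the $o(t)$ error could go either way), so replacing $\hat L$ by $L$ in the denominator is neither needed nor, in general, valid. You should present the KKT computation with $\hat L$ in the conclusion as the full content of the lemma, and drop the speculative conversion.
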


\begin{proof}
Straightforward calculus shows that
\begin{align*}
\tilde P_{ti} = \frac{1}{\eta_t^2(\lambda + \hat L_{t-1,i})^2}\,,
\end{align*}
where $\lambda \in \R$ is the unique value such that $\tilde P_t \in \Delta^{k-1}$. 
Clearly $\lambda > -\hat L_{t-1,1}$ and the result follows. 
\end{proof}

\begin{lemma}\label{lem:mg}
Suppose that $g(n) = o(1/\log(n))$ and let 
\begin{align*}
\tau_i = \max\{t : \hat L_{ti} - \hat L_{t1} \leq t\Delta_i / 2\}\,. 
\end{align*}
Then $\E[\tau_i] < \infty$.
\end{lemma}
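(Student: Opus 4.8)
The plan is to show that the empirical gap $\hat L_{ti} - \hat L_{t1}$ exceeds $t\Delta_i/2$ for all large $t$ with overwhelming probability, so that the ``last failure time'' $\tau_i$ has summable tails. Write $M_{ti} = \hat L_{ti} - L_{ti} = \sum_{s=1}^t(\hat \ell_{si} - \ell_{si})$; since the adversary is oblivious and $P_s$ is $\cF_{s-1}$-measurable, $(M_{ti})_t$ is a martingale with respect to $(\cF_t)$. By \cref{ass:linear} there is an $n_0$ with $L_{ti} - L_{t1} \ge \tfrac34 t\Delta_i$ for all $t \ge n_0$, so on the event $\{\hat L_{ti} - \hat L_{t1} \le t\Delta_i/2\}$ with $t \ge n_0$ one has $M_{ti} - M_{t1} \le -t\Delta_i/4$, hence $M_{ti} \le -t\Delta_i/8$ or $M_{t1} \ge t\Delta_i/8$. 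It therefore suffices to bound these two deviation probabilities by something summable against $t$: indeed $\{\tau_i \ge t\} \subseteq \bigcup_{s\ge t}\{\hat L_{si} - \hat L_{s1}\le s\Delta_i/2\}$, so $\E[\tau_i] = \sum_{t\ge1}\Prob{\tau_i \ge t} \le \mathrm{const} + \sum_{s}s\,\Prob{\hat L_{si} - \hat L_{s1} \le s\Delta_i/2}$, and the same union bound with Borel--Cantelli will give $\tau_i < \infty$ a.s.

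For the concentration I would apply Freedman's inequality. The forced exploration gives $P_{si} \ge \gamma_s/k$, hence $\hat \ell_{si} \le k/\gamma_s$ and $\E[(\hat\ell_{si} - \ell_{si})^2 \mid \cF_{s-1}] \le \E[\hat\ell_{si}^2 \mid \cF_{s-1}] = \ell_{si}^2/P_{si} \le k/\gamma_s$; summing, the predictable quadratic variation of $M_{ti}$ up to time $t$ is at most $V_t := k\sum_{s=1}^t \gamma_s^{-1} = k\,t^2 g(t)$. The increments satisfy $\hat\ell_{si} - \ell_{si} \in [-1,\ k/\gamma_s] \subseteq [-1,\ b_t]$ with $b_t := k/\gamma_t$, using that $(\gamma_t)$ is nonincreasing. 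Freedman's inequality then yields, with $\lambda = t\Delta_i/8$,
\[
\Prob{M_{ti} \le -\lambda} \le \exp\!\Big(-\tfrac{\lambda^2}{2(V_t + \lambda/3)}\Big), \qquad \Prob{M_{t1} \ge \lambda} \le \exp\!\Big(-\tfrac{\lambda^2}{2(V_t + b_t\lambda/3)}\Big),
\]
where the $-M_{ti}$ tail uses the increment bound $1$ and the $M_{t1}$ tail uses the bound $b_t$.

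It remains to check that both exponents grow faster than $\log t$. Here $\lambda^2 = \Theta(t^2)$, while $V_t = k\,t^2 g(t) = o(t^2/\log t)$ by hypothesis and $\lambda/3 = O(t) = o(t^2/\log t)$; the only delicate term is $b_t\lambda = \tfrac18(k/\gamma_t)\,t\Delta_i$, for which I would first establish the pointwise bound $\gamma_t^{-1} = o(t/\log t)$. This follows from $g(n) = o(1/\log n)$ together with monotonicity: since $\gamma_s^{-1}$ is nondecreasing, $t\,\gamma_t^{-1} \le \sum_{s=t+1}^{2t}\gamma_s^{-1} \le \sum_{s=1}^{2t}\gamma_s^{-1} = 4t^2 g(2t)$, so $\gamma_t^{-1} \le 4t\,g(2t) = o(t/\log t)$, whence $b_t\lambda = o(t^2/\log t)$ as well. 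Thus both Freedman bounds are $\exp(-\omega(\log t)) = t^{-\omega(1)}$, in particular eventually $\le t^{-3}$, so $\sum_s s\,\Prob{\hat L_{si} - \hat L_{s1}\le s\Delta_i/2}$ converges and $\E[\tau_i] < \infty$ (and $\tau_i<\infty$ a.s.\ by Borel--Cantelli). The main obstacle is exactly this last point: converting the averaged hypothesis $g(n) = o(1/\log n)$ into the pointwise bound $\gamma_t^{-1} = o(t/\log t)$ needed to tame the unbounded martingale increments of $\hat L_{t1}$; once that is in hand, the rest is a routine Bernstein-type estimate and a union bound.
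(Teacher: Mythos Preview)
Your argument is correct and takes a genuinely different route from the paper's. The paper bundles both coordinates into the single martingale $M_t = \hat L_{ti} - L_{ti} + L_{t1} - \hat L_{t1}$, applies a finite-time law-of-the-iterated-logarithm bound \citep{Bal14} to control $|M_t|/t$ simultaneously for all $t$ at confidence level $\delta$, and then introduces a random confidence level $\Lambda$ with $\Prob{\Lambda\ge x}\le 1/x$ for which the uniform bound holds; inverting $c\sqrt{g(t)\log(\Lambda\log t)}\le \Delta_i/2$ gives $\tau_i\le h^{-1}(c_1/\log\Lambda)+c_2$ for a decreasing $h\ge g$ with $h=o(1/\log n)$, and $\E[\tau_i]$ is then bounded by integrating the tail of $\Lambda$. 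Your approach instead handles each $t$ separately via Freedman's inequality and sums the tails; this is more elementary (no LIL machinery), and your conversion of the averaged hypothesis $g(n)=o(1/\log n)$ into the pointwise bound $\gamma_t^{-1}=o(t/\log t)$ through $t\,\gamma_t^{-1}\le\sum_{s=t+1}^{2t}\gamma_s^{-1}\le 4t^2 g(2t)$ is precisely the device needed to tame the unbounded upward increments of $M_{t1}$ in the Bernstein term. One caveat: both that averaging trick and the step $k/\gamma_s\le k/\gamma_t=b_t$ for $s\le t$ rely on $(\gamma_t)$ being (eventually) nonincreasing, which you invoke but which is not part of the lemma's hypotheses; it does hold for the schedule $\gamma_t=\log(t)\log\log(t)/t$ used in \cref{thm:linear}, and the paper's appeal to the LIL implicitly needs comparable control on the increment sizes, so the two proofs are on equal footing here. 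In short, the paper's route gives a clean uniform-in-$t$ bound in one shot, while yours is more self-contained and makes the dependence on the increment bound explicit.
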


\begin{proof}
Define sequence of random variables by
\begin{align*}
M_t = \hat L_{ti} - L_{ti} + L_{t1} - \hat L_{t1}\,,
\end{align*}
which is a martingale adapted to $(\cF_t)_{t=1}^\infty$ with $M_0 = 0$ and
\begin{align*}
\E[(M_{t+1} - M_t)^2 \mid \cF_t] 
&\leq 2 \E\left[\hat \ell_{t+1,i}^2 + \hat \ell_{t+1,1}^2 \,\bigg|\, \cF_t\right] 
\leq \frac{4}{\gamma_t}\,.
\end{align*}
By a finite-time version of the law of the iterated logarithm \citep{Bal14} it holds with probability at least $1 - \delta$ that
\begin{align}
\frac{|M_t|}{t} \leq c \sqrt{g(t) \log \left(\frac{\log(t)}{\delta}\right)}\,.
\label{eq:lil}
\end{align}
Then define random variable $\Lambda$ to be the smallest value such that $\Lambda \geq 1$ and
\begin{align*}
\frac{|M_t|}{t} \leq c \sqrt{g(t) \log\left(\Lambda \log(t)\right)} \quad \text{ for all } t\,. 
\end{align*}
By \cref{eq:lil}, $\Prob{\Lambda \geq x} \leq 1/x$ for all $x \geq 1$. 
Let $h : \R \to \R$ be a strictly decreasing function such that $g(n) \leq h(n)$ and $h(n) = o(\log(n)^{-1})$.
Using the definition of $M_t$, \cref{ass:linear} and by
inverting the above display,
\begin{align*}
\tau_i \leq h^{-1}\left(\frac{c_1}{\log(\Lambda)}\right) + c_2\,,
\end{align*}
where $c_1, c_2$ are constants that depend on the loss sequence, but not the horizon.
Hence
\begin{align*}
\E[\tau_i] 
&\leq \int^\infty_0 \Prob{h^{-1}\left(\frac{c_1}{\log(\Lambda)}\right) \geq x - c_2} dx \\
&= \int^\infty_0 \Prob{\Lambda \geq \exp(c_1 h(x - c_2))} dx \\
&\leq \int^\infty_0 \min\{1, \exp(-c_1 h(x - c_2))\} dx \\
&< \infty\,. \qedhere
\end{align*}
\end{proof}

\begin{proof}[Proof of \cref{thm:linear}]
Suppose that $\eta_t$ and $\gamma_t$ are defined by
\begin{align*}
\eta_t &= \sqrt{1/t} &
\gamma_t &= \frac{\log(t) \log\log(t)}{t}\,. 
\end{align*}
That $R_n = \cO(\sqrt{nk})$ follows from the standard analysis of INF with adaptive learning rates \citep{ZiSe18} and the observation that the exploration only contributes a lower order
term of order
\begin{align*}
\sum_{t=1}^n \gamma_t = o(\sqrt{n})\,.
\end{align*}
For the second part. Given $i > 1$ define random time
\begin{align*}
\tau_i = \max\{t : \hat L_{t-1,i} - \hat L_{t-1,1} \leq t \Delta_i / 2 \}\,. 
\end{align*}
Then let $\tau = \max_{i > 1} \tau_i$. By \cref{lem:diff}, for $t \geq \tau$ the definition of the algorithm ensures that $P_{ti} \leq 4/(t \Delta_i^2)$ for all $i > 1$.
Decomposing the regret,
\begin{align*}
R_n 
&= \E\left[\sum_{t=1}^n (\ell_{tA_t} - \ell_{t1})\right] 
= \E\left[\sum_{t=1}^n \ip{P_t - e_1, \ell_t}\right] \\
&= \E\left[\sum_{t=1}^n \ip{\tilde P_t - e_1, \ell_t}\right] + \E\left[\sum_{t=1}^n \gamma_t \ip{\ones/k - \tilde P_t, \ell_t}\right] \\
&\leq \E\left[\sum_{t=1}^n \ip{\tilde P_t - e_1, \ell_t}\right] + \sum_{t=1}^n \gamma_t \\
&\leq \E\left[\sum_{t=\tau+1}^n \ip{\tilde P_t - e_1, \ell_t}\right] + \E[\tau] + \sum_{t=1}^n \gamma_t \\
&\leq \E[\tau] + \sum_{t=1}^n \gamma_t + \cO(\log(n)) \\
&\leq \sum_{i=2}^k \E[\tau_i] + \sum_{t=1}^n \gamma_t + \cO(\log(n))\,.
\end{align*}
The result follows from \cref{lem:mg} and the fact that
\begin{equation*}
\sum_{t=1}^n \gamma_t = \cO\left(\log(n)^2 \log\log(n)\right)\,. \qedhere
\end{equation*}
\end{proof}

\fi

\end{document}